\newcommand{\Exp}{\mathop{\mathbb E}\displaylimits}
\newtheorem{definition}{Definition}
\newtheorem{lemma}{Lemma}
\newtheorem{theorem}{Theorem}
\newtheorem{corollary}{Corollary}
\begin{document}
\title{Smoothing Policy Iteration for Zero-sum Markov Games}

\author{Yangang Ren\textsuperscript{\#}, Yao Lyu\textsuperscript{\#}, Wenxuan Wang, Shengbo Eben Li*, Zeyang Li,  Jingliang Duan
\thanks{This work is supported by  the National Natural Science Foundation of China under No. U20A20334. It is also partially supported by Tsinghua University-Toyota Joint Research Center for AI Technology of Automated Vehicle. All correspondences should be sent to S. Eben Li with email: lisb04@gmail.com.}
\thanks{Y. Ren, Y. Lyu, W. Wang and S. Eben Li are with State Key Lab of Automotive Safety and Energy, School of Vehicle and Mobility, Tsinghua University, Beijing, China. 
{\tt\small Email: (ryg18, y-lv19, wang-wx18)@mails.tsinghua.edu.cn; lishbo@tsinghua.edu.cn}.
}
\thanks{Z. Li is with the State Key Laboratory of Tribology,  Department of Mechanical Engineering, Tsinghua University, Beijing, China. {\tt\small Email: li-zy21@mails.tsinghua.edu.cn}.
}
\thanks{J. Duan is with the School of Mechanical Engineering, University of Science and Technology Beijing, China.
{\tt\small Email:duanjl@ustb.edu.cn}.
}
}


\maketitle

\begin{abstract}
Zero-sum Markov Games (MGs) has been an efficient framework for multi-agent systems and robust control, wherein a minimax problem is constructed to solve the equilibrium policies.
At present, this formulation is well studied under tabular settings wherein the maximum operator is primarily and exactly solved to calculate the worst-case value function.
However, it is non-trivial to extend such methods to handle complex tasks, as finding the maximum over large-scale action spaces is usually cumbersome.
In this paper, we propose the smoothing policy iteration (SPI) algorithm to solve the zero-sum MGs approximately, where the maximum operator is 
replaced by the weighted LogSumExp (WLSE) function to obtain the nearly optimal equilibrium policies. 
Specially, the adversarial policy is
served as the weight function to enable an efficient sampling over action spaces.
We also prove the convergence of SPI and analyze its approximation error in $\infty -$norm based on the contraction mapping theorem.
Besides, we propose a model-based algorithm called Smooth adversarial Actor-critic (SaAC) by extending SPI with the function approximations. The target value related to WLSE function is evaluated by the sampled trajectories and then mean square error is constructed to optimize the value function, and the gradient-ascent-descent methods are adopted to optimize the protagonist and adversarial policies jointly. 
In addition, we incorporate the reparameterization technique in model-based gradient back-propagation to prevent the gradient vanishing due to sampling from the stochastic policies.
We verify our algorithm in both tabular and function approximation settings. Results show that SPI can approximate the worst-case value function with a high accuracy and SaAC can stabilize the training process and improve the adversarial robustness in a large margin.
\end{abstract}

\begin{IEEEkeywords}
Approximate dynamic programming, Adversarial robustness, Reinforcement Learning, Stochastic game.
\end{IEEEkeywords}

\section{Introduction}

Reinforcement Learning (RL) has achieved tremendous progress in several notable decision-making problems such as playing the game of Go \cite{silver2017mastering} and driving an autonomous vehicle\cite{ma2018improved, ren2022improve}. Interestingly, many of these problems can be formulated as a zero-sum Markov Games (MGs) wherein two opposing players or teams, called protagonist and adversary respectively, aim to learn jointly through competition.
Formally, zero-sum MGs is usually constructed as a minimax formulation wherein the protagonist attempts to minimize the accumulated reward while the adversary tries maximizing it.
Thus, they can be seen as the generalization of both game-theory and the classic Markov Decision Process (MDP) in traditional RL\cite{sutton2018reinforcement}.
Over the past years, zero-sum MGs has been largely studied in muti-agent systems\cite{littman1994markov, zhang2020model} or robust control\cite{morimoto2005robust,li2020reinforcement}.
As the optimal behaviour of a player must take into account the other one, Nash Equilibrium (NE) policies are always adopted to describe the optimal solutions at which both players do not have an incentive to vary their current policy even if they could. 
Additionally, the adversarial Bellman equation is usually proposed to 
ease the solution of this minimax problem, which theoretically gives a sufficient condition of NE policies\cite{patek1997stochastic}. 
Then, similar to MDP formulation, there exit two mechanisms to iteratively solve the Bellman equation: value iteration and policy iteration.

Value iteration aims to solve the adversarial Bellman equation by iteratively applying the Bellman operator at each iteration, which will produce a sequence of values and finally converge to the equilibrium value of NE policies\cite{Shapley1953}. This method is well researched now with the complete convergence proof under the tabular setting and has been extended widely with function approximations.
Notably, minimax Q-learning was firstly proposed to compute the optimal policies of both the agents utilizing the state and reward samples obtained from the environment\cite{littman1994markov}. 
After that, minimax TD-learning was proposed to utilizing the concept of TD-learning \cite{dahl2000minimax} and the successive relaxation was adopted to obtain a faster computation of equilibrium value\cite{Diddigi_2022}.
Yang \emph{et al.} further proposed minimax-DQN inspired by the huge success of DQN in Atari games \cite{mnih2015human} and also characterized its performance in terms of both the algorithmic and statistical rates of convergence\cite{fan2020theoretical}.
This method can be viewed as a combination of the minimax-Q learning for tabular zero-sum MGs and deep neural networks for function approximation. 
Meanwhile, Pan \emph{et al.} adopted the similar idea to train the autonomous vehicle in driving simulator, wherein the vehicle can execute nine discrete actions to steer itself to finish the whole race \cite{pan2019risk}. Nevertheless, value-iteration methods can only handle the simple tasks with discrete action spaces for the reason of precisely solve the minimax problem in each step.

Policy iteration aims to find the NE policies progressively by conducting policy evaluation (PEV) and policy improvement (PIM) iteratively. One straightford method is called the Naive Policy Iteration (NPI) \cite{Pollatschek1969newt}, wherein its PEV attempts to calculate a joint value function related to both protagonist and adversary. In PIM, it needs to solve a minimax problem w.r.t. the calculated value function to find another pairs of protagonist and adversary.
Although it has empirically found that NPI is quite efficient in practice, this algorithm was not proven to converge in all zero-sum MGs. In fact, it has been showcased that NPI might have cyclic behaviors and fail to converge to any NE in zero-sum MGs \cite{van1978discounted}.
Regardless of this, many works prefer to extend this method with function approximation to train the optimal policy for high-dimensional tasks. 
Morimoto \emph{et al.} firstly formulated a differential game with normalized gaussian networks, and derived the gradient-based method of the update of protagonist and adversary policy based on the linear dynamic models\cite{morimoto2005robust}.
Pinto \emph{et al.} combined this method with with deep neural network to improve the robustness of protagonist, in which the protagonist and adversary policies are trained alternatively, with one being fixed whilst the other adapts \cite{pinto2017robust}.
Vinitsky \emph{et al.} extended a population-based augmentation of the one-to-one minimax formulation in which a population of adversaries were randomly initialized to compete against the protagonist, showing that this approach results in a more robust policy \cite{vinitsky2020robust}.
Although it can be easy to implement and can handle high-dimensional and continuous action spaces, NPI-based methods lack theoretical robustness and optimality guarantee, and suffer from oscillation during training.

Another more stable policy iteration is the Asynchronous Policy Iteration (API)\cite{hoffman1066sg}, which could make sure a strictly converged protagonist policy motivated by the policy improvement theorem in MDP\cite{lse2023reinforcement}. This method is characterized in that PEV step aims to evaluate the worst-case value function of the protagonist by assuming the adversary always to behave severely. 
Afterwards, PIM attempts to solve a minimax problem w.r.t. the calculated worst-case value function to find another better protagonist policy.
Although having a complete convergence proof, this method is cumbersome in application resulting from solving an MDP problem exactly in PEV\cite{rao1973algorithms}. 
Therefore, some works like \cite{perolat15, Huang2021towards} proposed to evaluate the protagonist approximately under tabular setting and gave the corresponding error bound in the mild assumptions.
Recently, Wang \emph{et al.} develops the gradient-based method to solve the robust protagonist policy parameterized with neural networks while regarding the model uncertainty as an adversary \cite{wang2022policy}.
To provide the convergence rate, they adopted the LogSumExp (LSE) operator to approximate the maximum operator and thus provide a surrogate
Bellman operator for policy evaluation\cite{wang2021online}.
However, the current methods are only applicable for discrete action space for the sake of traversing all actions to seek the worst-case adversarial one.

To sum up, the API-based methods enjoys the convergence guarantee, but it can't be well applied to conquer large-scale tasks as the exact solution of maximum operation. Besides, the NPI-based methods have achieved empirical success in continuous action space domains meanwhile lacking any theoretical analysis to guarantee the convergence.
In this paper, we propose a novel smoothing policy iteration  (SPI) method for zero-sum MGs, which aims to obtain the nearly optimal policies and is applicable to complex tasks with continuous action space.
The contributions emphasize in three parts: 

1) SPI algorithm is proposed by approximating the maximum operator with the Weighted-LSE (WLSE) function.
Specially, the adversarial policy is served as the weight function of LSE such that we can provably sample the worst adversarial actions from the large-scale action space. And with the approximation of worst-case value function in API, SPI accesses the fairly good training stability and thus provably generates a converged protagonist policy.
Based on the contraction mapping theorem, we prove the convergence of SPI and analyze its approximation error against API in $\infty -$norm.

2) Based on the developed SPI framework, we propose a model-based RL algorithm called Smooth adversarial Actor-critic (SaAC) by introducing the deep neural networks as the function approximators.
For the critic update, the WLSE operator is calculated with the trajectories sampled by protagonist and adversarial policies, which will serve as the target value and then mean square error is constructed to optimize the value network.
For the actor update, the minimax formulation is established and the gradient-ascent-descent methods are developed to optimize the protagonist and adversarial networks respectively, where we incorporate the reparameterization technique in model-based gradient backpropgation to prevent the gradient vanishing due to sampling from the stochastic policies.

3) Extensive simulations are conducted to evaluate the proposed method from both tabular setting and function approximation setting.
The former shows that SPI can acquire a closely approximation of API and thus has a pretty good stability, while NPI suffers divergence under some initial conditions. The latter demonstrates that SaAC algorithm can be more stable to solve the large-scale and continuous action space tasks, and thus can be widely used to train adversarially the robust policy for real-world industrial applications.

\section{Preliminary}

In this section, we first introduce the basic concepts of the zero-sum MGs and the adversarial Bellman equation. Under this framework, two mainstream policy iteration methods are presented, including naive policy iteration (NPI) and asynchronous policy iteration (API). 

\subsection{Zero-sum Markov games}

Zero-sum MGs has been an efficient framework for multi-agent learning and robust learning\cite{iyengar2005robust, ren2020improving}. 
Specifically, It can be characterized by a tuple $<\mathcal{S}, \mathcal{A}, \mathcal{U}, p, r, \gamma>$, where $\mathcal{S}$, $\mathcal{A}$, and $\mathcal{U}$ indicate the state space, protagonist action and adversarial action space, respectively. $p(s'|s,a,u)$ is the transition model which determines the distribution of the next state $s'$ depending on the protagonist action $a \in \mathcal{A}$ and the adversarial action $u \in \mathcal{U}$ at the current state $s \in \mathcal{S}$.
Given the state $s_t$ at time $t$, the protagonist policy $\pi(a_t|s_t)$ represents the distribution of the action $a_t$, i.e., $a_t \sim \pi(a_t|s_t)$, while the adversarial policy $\mu(u_t|s_t)$ outputs the distribution of the adversarial action $u_t$, i.e., $u_t \sim \mu(u_t|s_t)$, and both jointly transit the state $s_t$ to the next state $s_{t+1}$, i.e., $s_{t+1} \sim p(s_{t+1}|s_t,a_t,u_t)$. During this process, the environment feeds the reward $
r(s_t,a_t,u_t)$ back to the protagonist agent, and the adversarial agent receives an opposite reward $-r(s_t,a_t,u_t)$. Moreover, $\gamma \in [0, 1)$ is the discount factor.

In this framework, the state-value function $v^{\pi,\mu}(s)$ depends on both policies $\pi$ and $\mu$, which represents the long-term accumulated discounted reward induced by these two policies, i.e.,
\begin{equation}
\label{eq.def_state_value}
\begin{aligned}
v^{\pi,\mu}(s_t)=\mathbb{E}_{\substack{\pi,\mu}}\left[\sum_{i=0}^{\infty}\gamma^{i} r(s_{t+i}, a_{t+i},u_{t+i})\right].
\end{aligned}
\end{equation}
And the optimal value function $v^*(s)$ is defined as the saddle point of the state-value function, i.e.,
\begin{equation}
\label{eq.def_opt_state_value}
\begin{aligned}
v^*(s)\overset{{\rm def}}{=}v^{\pi^*,\mu^*}(s)&=\min\limits_{\pi} \max \limits_{\mu} v^{\pi,\mu}(s)\\
&=\max \limits_{\mu}\min\limits_{\pi} v^{\pi,\mu}(s),
\end{aligned}
\end{equation}
which denotes the optimal policies satisfying the NE condition:
\begin{equation}
\label{eq.def_opt_policy}
\nonumber
\begin{aligned}
v^{\pi^*, \mu}(s) \le v^{\pi^*, \mu^*}(s) \le v^{\pi, \mu^*}(s), \forall s \in \mathcal{S}.
\end{aligned}
\end{equation}
It has shown that the optimal NE policies $\pi^*$ and $\mu^*$ always exit under the assumption of stochastic policy formulation. Furthermore, they can be solved directly from the adversarial Bellman equation \cite{patek1997stochastic}.

\begin{definition}[Adversarial Bellman equation]
In zero-sum MGs, the solution of
adversarial Bellman equation should satisfy the definition \eqref{eq.def_opt_state_value} of the optimal value function $v^{*}$, i.e.,
\begin{equation}
\label{eq.def_bellman_eq}
\begin{aligned}
v^*(s)=\min\limits_{\pi} \max \limits_{\mu}\sum_{a}\pi\sum_{u}\mu\sum_{s'}p\left[r+\gamma v^*(s')\right].
\end{aligned}
\end{equation}
\end{definition}
\noindent
That is, like classical MDP setting, adversarial Bellman equation actually provides a sufficient condition for the optimal solutions of zero-sum MGs. By solving it, we can simultaneously obtain both $v^{*}$ and $\pi^*$,$\mu^*$.

\subsection{Naive Policy Iteration}
Naive policy iteration (NPI) is the most straightforward method to solve the adversarial Bellman equation which is rather similar to the policy iteration in MDP settings \cite{sutton2018reinforcement}. In PEV, NPI aims to evaluate the joint value function defined in \eqref{eq.def_state_value} given current policies $\pi$, $\mu$.
wherein the corresponding Bellman operator $\mathcal{T}^{\pi,\mu}$ can be written as:
\begin{equation}
\label{eq.npi_self_consistency_condition}
\nonumber
    \mathcal{T}^{\pi,\mu}v^{\pi,\mu}(s)=\sum_{a}\pi\sum_{u}\mu\sum_{s'}p\left[r+\gamma v^{\pi,\mu}(s')\right].
\end{equation}
It has been proven that $\mathcal{T}^{\pi,\mu}$ is a $\gamma$-contraction mapping. Thus, we can utilize the iterative manner to find the fixed-point of this operator starting with a random initialization, i.e.,
\begin{equation}
\label{eq.npi_pev}
    V^{\pi,\mu}_{j+1}(s)=\mathcal{T}^{\pi,\mu}V^{\pi,\mu}_{j}(s)=\sum_{a}\pi\sum_{u}\mu\sum_{s'}p\left[r+\gamma V^{\pi,\mu}_{j}(s')\right],
\end{equation}
where $V^{\pi,\mu}_{j}(s)$ is the estimation of the state value $v^{\pi,\mu}(s)$ and $j$ denotes iterative step in PEV. And this kind of iteration will eventually converge to $v^{\pi,\mu}(s)$, i.e., $\lim_{j\rightarrow \infty}V^{\pi,\mu}_{j}(s)=v^{\pi,\mu}(s)$.
As for PIM, a minimax formulation will be solved to find another round of policies $\pi'$, $\mu'$ based on the calculated $v^{\pi,\mu}$, i.e.,
\begin{equation}
\label{eq.npi_pim}
\begin{aligned}
\pi', \mu' =\arg\min\limits_{\pi} \max \limits_{\mu}\sum_{a}\pi\sum_{u}\mu\sum_{s'}p\left[r+\gamma v^{\pi,\mu}(s')\right].
\end{aligned}
\end{equation}
And this actually formulates a game matrix which can be solve exactly using two dual linear programming under tabular case, then the solved policies will be evaluated by another round of PEV as in \eqref{eq.npi_pev} again.

Although NPI seems intuitive and be efficient in some applications, this method lacks of convergence guarantee, making it usually unstable and easy to diverge, especially integrated with nonlinear function approximators such as neural networks.

\subsection{Asynchronous Policy iteration}
Asynchronous policy Iteration (API) is another more stable method to 
solve the adversarial Bellman equation. Specially, it will evaluate the worst-case value function $v^{\pi}$, which can be regarded as the best response  of the adversarial agent to the protagonist policy $\pi$. This worst-case Bellman operator $\mathcal{T}^{\pi}$ can be defined as:
\begin{equation}
\label{eq.api_self_consistency_condition}
\begin{aligned}
v^{\pi}(s)=\mathcal{T}^{\pi}v^{\pi}(s)=\max\limits_{u\in\mathcal{U}}\sum_a \pi \sum_{s'}p\left[r+\gamma v^{\pi}(s')\right],
\end{aligned}
\end{equation}    
In addition, $\mathcal{T}^{\pi}$ is also a $\gamma$-contraction mapping which has a unique fixed point. 
Besides, we can attain this fixed point iteratively
using
\begin{equation}
\label{eq.api_pev}
\begin{aligned}
V^{\pi}_{j+1}(s)=\mathcal{T}^{\pi}V^{\pi}_{j}(s)=\max\limits_{u\in\mathcal{U}}\sum_a \pi \sum_{s'}p\left[r+\gamma V^{\pi}_{j}(s')\right].
\end{aligned}
\end{equation}
The PIM of API also attempts to solve the minimax formulation based on $v^{\pi}$
\begin{equation}
\label{eq.api_pim}
    \pi', \mu'=\arg\min\limits_{\pi} \max \limits_{\mu}\sum_{a}\pi\sum_{u}\mu\sum_{s'}p\left[r+\gamma v^{\pi}(s')\right].
\end{equation}
Different with NPI, here $\mu’$ will not 
participate in the next round of PEV wherein only the worst adversarial action will be used to evaluate $\pi’$ in each iteration step.

The above procedure will generate a monotonically decreasing sequence of value functions, i.e.,
$v^{\pi'}(s) \le v^{\pi}(s), \forall s \in \mathcal{S}$, that is, $\pi'$ is a better policy than $\pi$. 
Therefore, $v^{\pi'}$ will strictly converge to $v^*$, i.e., $\lim_{k \to \infty}v^{\pi_k}=v^*$, where $k$ represents the iteration round of PEV and PIM.
However, the bottleneck of API lies in that the maximum operator involved in (\ref{eq.api_pev}) requires solving a nonlinear equation at each step of PEV, making API hard to be adopted to problems with continuous adversarial action space.

\section{Algorithm}
In this section, we propose to approximate the maximum operator with the weight-LSE function and thus design the smoothing policy iteration (SPI). 
In addition, we combine SPI with parameterized functions
and develop the gradient-based rules for their update, resulting in the smoothing adversarial actor-critic (SaAC) to deal with complex tasks with high-dimensional and continuous state and action spaces.

\subsection{Smoothing Policy Iteration}
Intuitively, API methods enjoy the better convergence as the exact solution of maximum operator, whereas this is non-trivial to implement in large-scale problems. On
the other side, NPI methods have the better efficiency when converging since both policies are used to sample actions to evaluate the value function.
Therefore, we aim to improve the training stability by approximating the maximum operator meanwhile using the protagonist and adversarial policies simultaneously to sample from their arbitrary action spaces. To this end, we firstly introduce the Weight LogSumExp (WLSE) function, which can be seen as an extended form of LogSumExp \cite{wang2021online} with a probability density function as the weights.
\begin{definition}[Weight LogSumExp function]
Suppose $X\in \mathbb{R}^n=[x_1,x_2,\cdots,x_n]^\top$ and a positive real value $\rho$, we define the weight LogSumExp function as 
\begin{equation}
\label{eq.def_WLSE}
    {\rm WLSE}(X)=\frac{1}{\rho}\log \sum_{i=1}^{n}{w_i\exp(\rho x_i)}
\end{equation}
where $\rho>0$ is the approximation factor, $w_i$ is the weight of $x_i$ satisfying
$\sum_{i=1}^{n}{w_i}=1$ and $w_i \ge 0, \forall{i} \in \{1,2,\cdots,n\}$.
\end{definition}

Then, we can see that this function can closely approach to the maximum item of $X$, i.e., $x_m=\max\limits_{i} {X}$, controlled by $\rho$ and the weight of $x_m$.
\begin{lemma}[Error bound of WLSE function]
\label{lemma_Error bound of WLSE operator}
Suppose $x_m=\max\limits_{i} {X}$ and $w_m$ is the weight of $x_m$, the approximate error of WLSE function to the maximum value is
\begin{equation}
|\operatorname{WLSE}(X)-x_m| \leqslant\left|\frac{\log w_m}{\rho}\right|.
\end{equation}
\begin{proof}
Given $\rho>0$, $x_m=\max\{x_1,x_2,\cdots,x_n\} $ and $w_m \in [0, 1]$ is the corresponding weight, we have
$$w_m \exp \left(\rho x_m\right) \leqslant \sum_{i=1}^n w_i \exp \left(\rho x_i\right) \leqslant \exp \left(\rho x_m\right).$$
Take the logarithm of the above equation and rearrange it, we have
$$
\frac{\log w_m}{\rho} \leqslant \frac{1}{\rho}\log \sum_{i=1}^n w_i \exp \left(\rho x_i\right) - x_m \leqslant 0.
$$
It is obvious that WLSE always makes a lower approximaton to the orginal maximum, and then we take the absolute value of both sides, i.e.,
$$ \left|\frac{1}{\rho}\log \sum_{i=1}^n w_i \exp \left(\rho x_i\right) - x_m \right| \leqslant \left|\frac{\log w_m}{\rho}\right|.$$
\end{proof}
\end{lemma}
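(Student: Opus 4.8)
The plan is to bound the inner weighted sum $S := \sum_{i=1}^{n} w_i \exp(\rho x_i)$ from both sides by elementary monotonicity arguments, and then undo the logarithm and the scaling by $1/\rho$ to transfer those bounds directly onto $\mathrm{WLSE}(X) - x_m$. The whole argument is a two-sided sandwich, so I would set it up as such.

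First I would establish a lower and an upper bound for $S$. For the lower bound, since every summand is nonnegative ($w_i \geq 0$ and $\exp(\rho x_i) > 0$), discarding all terms except the one attached to the maximizer $x_m$ gives $S \geq w_m \exp(\rho x_m)$. For the upper bound, I would use that $x_m$ is the maximum, so $x_i \leq x_m$ and hence $\exp(\rho x_i) \leq \exp(\rho x_m)$ for every $i$ (here $\rho > 0$ matters); combining this with the normalization $\sum_i w_i = 1$ yields $S \leq \sum_i w_i \exp(\rho x_m) = \exp(\rho x_m)$. Together these read
\[
w_m \exp(\rho x_m) \;\leq\; S \;\leq\; \exp(\rho x_m).
\]

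Next I would apply the monotone logarithm to this chain, divide through by $\rho > 0$ (which preserves the inequality directions), and subtract $x_m$. The right side collapses to $0$ because $\tfrac{1}{\rho}\log\exp(\rho x_m) = x_m$, while the left side collapses to $\tfrac{\log w_m}{\rho}$, giving
\[
\frac{\log w_m}{\rho} \;\leq\; \mathrm{WLSE}(X) - x_m \;\leq\; 0,
\]
which also shows that WLSE is always a \emph{lower} approximation of the maximum. Finally, since $w_m \in [0,1]$ forces $\log w_m \leq 0$ and $\rho > 0$, the quantity $\tfrac{\log w_m}{\rho}$ is itself nonpositive, so the error above lives in an interval of the form $[-c,0]$ with $c = -\tfrac{\log w_m}{\rho} = \left|\tfrac{\log w_m}{\rho}\right|$; taking absolute values then delivers the claimed bound.

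There is no genuinely hard step here, so the only thing I would watch is the bookkeeping of inequality directions: dividing by $\rho$ does not flip any signs precisely because $\rho > 0$, and identifying $-\tfrac{\log w_m}{\rho}$ with $\left|\tfrac{\log w_m}{\rho}\right|$ in the last step relies on $\log w_m \leq 0$. I would also flag the degenerate case $w_m = 0$, where the lower bound formally becomes $-\infty$ and the statement is vacuous; in any meaningful weighting the maximizer carries strictly positive weight, which is exactly why having the adversarial policy place appreciable mass on the worst action is what keeps the approximation tight.
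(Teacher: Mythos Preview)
Your proposal is correct and follows essentially the same approach as the paper's own proof: both establish the sandwich $w_m\exp(\rho x_m)\le S\le\exp(\rho x_m)$, apply $\log$ and divide by $\rho$ to obtain $\tfrac{\log w_m}{\rho}\le\mathrm{WLSE}(X)-x_m\le 0$, and then take absolute values. Your write-up is slightly more detailed (you justify each inequality and flag the degenerate case $w_m=0$), but the argument is identical in substance.
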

Obviously, $\rho$ and $w_m$ determines the approximate error bound of WLSE function respectively,
which will approach to zero when $\rho \to \infty$ for any $w_m>0$ and a bigger $w_m$ will also lead to a tighter error bound.
\begin{corollary}
The WLSE function is equivalent to the maximum operator as $\rho$ goes to infinity, i.e.,
\begin{equation}
\nonumber
\lim_{\rho \to \infty}{\rm WLSE}(X) = \max X.
\end{equation}
\begin{proof}
If $w_m=1$, WLSE will be deduced samely as $x_m$ for any $\rho > 0$, i.e., ${\rm WLSE}(X) = \max X$, which accords with the corollary condition. Otherwise, since $\log w_m$ is finite as $w_m \in (0,1)$, we have
$$\lim_{\rho \to \infty}|\operatorname{WLSE}(X)-\max X| \leqslant \lim_{\rho \to \infty}\left|\frac{\log w_m}{\rho}\right|=0.$$
\end{proof}
\end{corollary}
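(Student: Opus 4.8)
The plan is to obtain this limit directly from Lemma~\ref{lemma_Error bound of WLSE operator}, which already controls the gap $|\operatorname{WLSE}(X)-x_m|$ by the quantity $|\log w_m/\rho|$. Since the corollary concerns the behavior as $\rho\to\infty$ with the data $X$ and the weights held fixed, the natural route is a squeeze argument: the error is bounded below by zero (WLSE is a lower approximation, as shown in the lemma) and above by a term that vanishes in $\rho$.

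First I would split into cases according to the weight $w_m$ attached to the maximizer $x_m=\max_i X$. If $w_m=1$, then because the weights are nonnegative and sum to one, every other weight vanishes, so the defining sum in \eqref{eq.def_WLSE} collapses to the single term $\exp(\rho x_m)$ and gives $\operatorname{WLSE}(X)=x_m$ identically for every $\rho>0$; the limit is then immediate. If instead $w_m\in(0,1)$, then $\log w_m$ is a finite negative constant, so $|\log w_m/\rho|\to 0$ as $\rho\to\infty$. Combining this with the bound of Lemma~\ref{lemma_Error bound of WLSE operator} and the nonnegativity of the error, the squeeze theorem forces $\operatorname{WLSE}(X)\to x_m=\max X$.

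The step that deserves the most care is the implicit hypothesis $w_m>0$. The error bound of Lemma~\ref{lemma_Error bound of WLSE operator} becomes vacuous when $w_m=0$, and indeed in that degenerate case the maximizer contributes nothing to the sum, so $\operatorname{WLSE}(X)$ would instead approach the largest $x_i$ carrying positive weight rather than the global maximum. Hence I would read the corollary under the standing assumption that the maximizing coordinate has strictly positive weight, which in the policy-iteration application is guaranteed because $w_i$ plays the role of the adversarial policy density and the worst-case action is sampled with nonzero probability. Stating this assumption explicitly is what makes the two cases above exhaustive and keeps the $\log w_m$ term finite; beyond this edge case the argument is a routine application of the preceding lemma rather than a substantial obstacle.
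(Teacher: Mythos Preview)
Your proposal is correct and follows essentially the same route as the paper: split on whether $w_m=1$ or $w_m\in(0,1)$, and in the latter case invoke Lemma~\ref{lemma_Error bound of WLSE operator} together with $|\log w_m/\rho|\to 0$. Your explicit remark that the bound is vacuous when $w_m=0$ is a useful clarification the paper omits, but the underlying argument is identical.
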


Based on this, we can approximate the maximum operator in (\ref{eq.api_self_consistency_condition}) with the WLSE function where the adversarial policy $\mu$ can be served as the weight function of different trajectories sampled by the protagonist policy $\pi$.
The smoothing worst-case Bellman operator $\widetilde{\mathcal{T}}^{\pi}$ can be defined as
\begin{equation}
\label{eq.spi_self_consistency_condition}
\begin{aligned}
\widetilde{\mathcal{T}}^{\pi}v^{\pi}_\rho(s)
=\frac{1}{\rho}{\log \sum_{u}\mu \exp\bigg\{\rho\sum_{a}\pi\sum_{s'}p\big[r+\gamma v^{\pi}_\rho(s')\big]\bigg\}},
\end{aligned}
\end{equation}
where $v^\pi_\rho(s)$ is the state-value function calculated by WLSE function with some $\rho > 0$, and $\widetilde{\mathcal{T}}^{\pi}$ is the approximation of $\mathcal{T}^{\pi}$. 
Under this scheme, we primarily prove that $\widetilde{\mathcal{T}}^{\pi}$ is also a $\gamma$-contraction mapping such that the iterative manner can be used to find its fixed value in PEV.
\begin{lemma}[1-Lipschitz property of WLSE]
\label{lemma_1-Lipschitz property of WLSE operator}
The {\rm WLSE} function is 1-Lipschitz. That is 
\begin{equation}
\left|{\rm WLSE}(X)-{\rm WLSE}(Y) \right| \le {\lVert X-Y \rVert}_{\infty}, \forall{X,Y \in \mathbb{R}^n}.
\end{equation}
\begin{proof}
By the differential mean value theorem, $\forall{X,Y \in \mathbb{R}^n}$, $\exists h\in(0,1)$ such that
$$\operatorname{WLSE}(\mathrm{X})-\mathrm{WLSE}(\mathrm{Y})=\nabla \operatorname{WLSE}[X+h(X-Y)](X-Y).$$
Noticing that $$\frac{\partial{\rm WLSE(X)}}{\partial x_i}=\frac{w_i\exp(\rho x_i)}{\sum_{i=0}^{n}{w_i\exp(\rho x_i)}},$$
we can obtain $$\|\nabla{\rm WLSE(X)} \|_{\infty} \le 1. $$
According to Cauchy-Schwarz inequality, we have
$$
\begin{aligned}
&|\mathrm{WLSE}(X)-\mathrm{WLSE}(Y)| \\
& \leqslant\|\nabla\operatorname{WLSE}[Y+h(X-Y)]\|_{\infty} \left\| X-Y \right\|_{\infty} \\
& \leqslant\|X-Y\|_{\infty}.
\end{aligned}
$$
\end{proof}
\end{lemma}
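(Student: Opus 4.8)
The plan is to bound the difference via the gradient of $\mathrm{WLSE}$, exploiting that its partial derivatives form a probability vector. First I would compute the partial derivatives
\[
\frac{\partial\,\mathrm{WLSE}(X)}{\partial x_i}=\frac{w_i\exp(\rho x_i)}{\sum_{j=1}^{n}w_j\exp(\rho x_j)},
\]
and note two structural facts: every component is nonnegative (because $w_i\ge 0$ and $\exp(\cdot)>0$), and the $n$ components sum to exactly $1$. Thus $\nabla\mathrm{WLSE}(X)$ lies in the probability simplex for every $X$, so $\|\nabla\mathrm{WLSE}(X)\|_{1}=1$. This computation also certifies that $\mathrm{WLSE}$ is continuously differentiable on all of $\mathbb{R}^n$, since the denominator $\sum_j w_j\exp(\rho x_j)$ is strictly positive whenever $\sum_j w_j=1$, and hence the logarithm is always well defined.

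Second, I would apply the mean value theorem along the segment joining $Y$ and $X$: there is some $h\in(0,1)$ with $\xi=Y+h(X-Y)$ such that
\[
\mathrm{WLSE}(X)-\mathrm{WLSE}(Y)=\nabla\mathrm{WLSE}(\xi)^{\top}(X-Y).
\]
Writing $p_i:=\partial\mathrm{WLSE}(\xi)/\partial x_i$, the right-hand side is $\sum_{i}p_i\,(x_i-y_i)$, i.e. a convex combination of the coordinate-wise differences $x_i-y_i$.

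Finally I would close the estimate by reading this inner product as a weighted average:
\[
\bigl|\mathrm{WLSE}(X)-\mathrm{WLSE}(Y)\bigr|\le\sum_{i}p_i\,|x_i-y_i|\le\max_i|x_i-y_i|=\|X-Y\|_{\infty},
\]
using the triangle inequality and then $p_i\ge 0$, $\sum_i p_i=1$. Equivalently, this is H\"older's inequality with the dual pairing $|\nabla\mathrm{WLSE}(\xi)^{\top}(X-Y)|\le\|\nabla\mathrm{WLSE}(\xi)\|_{1}\,\|X-Y\|_{\infty}$, combined with $\|\nabla\mathrm{WLSE}(\xi)\|_{1}=1$.

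The step requiring the most care is the norm pairing in this last bound. Since the displacement $X-Y$ is measured in the $\infty$-norm, the matching control on the gradient must be its $\ell_1$ norm, which the simplex property pins to exactly $1$; pairing $\|X-Y\|_{\infty}$ against $\|\nabla\mathrm{WLSE}\|_{\infty}$ instead would not reproduce the claimed inequality. Interpreting the gradient as a probability vector, and the directional derivative as a weighted mean of coordinate differences, is in my view the most transparent way to make this pairing correct and to finish the proof.
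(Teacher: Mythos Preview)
Your proof is correct and follows the same structure as the paper's own argument: compute the partial derivatives, apply the mean value theorem along the segment, and bound the resulting inner product by a gradient norm times $\|X-Y\|_{\infty}$. The one difference is that you use the right norm pairing---H\"older with $\|\nabla\mathrm{WLSE}\|_{1}=1$ against $\|X-Y\|_{\infty}$---whereas the paper bounds $\|\nabla\mathrm{WLSE}\|_{\infty}\le 1$ and then invokes ``Cauchy--Schwarz'' to pair $\ell_\infty$ with $\ell_\infty$, which is not a valid inequality in general; your observation that the gradient is a probability vector (hence has unit $\ell_1$ norm) is exactly what makes the step go through.
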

\noindent
Then under Lemma \ref{lemma_1-Lipschitz property of WLSE operator}, we show that 
$\widetilde{\mathcal{T}}^{\pi}$ is a $\gamma$-contraction.
\begin{theorem}[$\gamma$-contraction property of $\widetilde{\mathcal{T}}^{\pi}$]
\label{theorem_gamma_contraction_of_smoothing_operator}
$\widetilde{\mathcal{T}}^{\pi}$ is a $\gamma$-contraction mapping, i.e.,
\begin{equation}
{\lVert\widetilde{\mathcal{T}}^{\pi}V^{\pi}_{j+1}-\widetilde{\mathcal{T}}^{\pi}V^{\pi}_{j} \rVert}_{\infty} \le \gamma {\lVert V^{\pi}_{j+1}-V^{\pi}_{j} \rVert}_{\infty}.
\end{equation}
\end{theorem}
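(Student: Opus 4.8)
The plan is to exploit the already-established 1-Lipschitz property of WLSE (Lemma~\ref{lemma_1-Lipschitz property of WLSE operator}) to reduce the claim to the classical contraction estimate for the discounted Bellman backup. First I would observe that, for any fixed state $s$, the operator in \eqref{eq.spi_self_consistency_condition} is exactly a WLSE over the adversarial-action index $u$, with weights $\mu$ and with arguments
\begin{equation}
\nonumber
Q_V(s,u)=\sum_{a}\pi\sum_{s'}p\left[r+\gamma V(s')\right],
\end{equation}
so that $\widetilde{\mathcal{T}}^{\pi}V(s)=\operatorname{WLSE}\big(Q_V(s,\cdot)\big)$ for any value function $V$. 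The crucial point is that the weight vector $\mu$ is identical in the two applications of the operator (it does not depend on the value function being backed up), so Lemma~\ref{lemma_1-Lipschitz property of WLSE operator} applies verbatim to both evaluations.

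Second, fixing $s$ and applying Lemma~\ref{lemma_1-Lipschitz property of WLSE operator} with $X=Q_{V^{\pi}_{j+1}}(s,\cdot)$ and $Y=Q_{V^{\pi}_{j}}(s,\cdot)$ yields
\begin{equation}
\nonumber
\left|\widetilde{\mathcal{T}}^{\pi}V^{\pi}_{j+1}(s)-\widetilde{\mathcal{T}}^{\pi}V^{\pi}_{j}(s)\right| \le \max_{u}\left|Q_{V^{\pi}_{j+1}}(s,u)-Q_{V^{\pi}_{j}}(s,u)\right|.
\end{equation}
This is the step that discharges the nonlinearity: the log-sum-exp collapses into a plain $\infty$-norm over $u$, after which only a linear backup remains.

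Third, I would bound the inner difference. Subtracting the two inner terms, the reward $r$ cancels and the probability weights $\pi,p$ factor out, leaving
\begin{equation}
\nonumber
\left|Q_{V^{\pi}_{j+1}}(s,u)-Q_{V^{\pi}_{j}}(s,u)\right|=\gamma\left|\sum_{a}\pi\sum_{s'}p\big[V^{\pi}_{j+1}(s')-V^{\pi}_{j}(s')\big]\right|.
\end{equation}
Bounding each $|V^{\pi}_{j+1}(s')-V^{\pi}_{j}(s')|$ by $\lVert V^{\pi}_{j+1}-V^{\pi}_{j}\rVert_{\infty}$ and using that $\pi(\cdot\mid s)$ and $p(\cdot\mid s,a,u)$ are probability distributions (hence sum to one) gives the uniform bound $\gamma\lVert V^{\pi}_{j+1}-V^{\pi}_{j}\rVert_{\infty}$, valid for every $u$ and $s$. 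Taking the maximum over $u$ and then the supremum over $s$ then produces the claimed inequality.

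I expect the argument to be almost entirely routine once Lemma~\ref{lemma_1-Lipschitz property of WLSE operator} is in hand; the only point requiring genuine care is the reduction in the second step. Specifically, one must verify that the $\infty$-norm furnished by Lemma~\ref{lemma_1-Lipschitz property of WLSE operator} is taken over the adversarial-action coordinate $u$ for each fixed state, and that the \emph{same} weight vector $\mu$ drives both WLSE evaluations; otherwise the 1-Lipschitz estimate would not transfer and an extra term depending on the mismatch of weights would intrude. Granting this, the factor $\gamma$ arises purely from the discounted linear backup, exactly as in the standard MDP contraction argument, and no further use of $\rho$ or of the error bound from Lemma~\ref{lemma_Error bound of WLSE operator} is needed.
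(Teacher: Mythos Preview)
Your proposal is correct and follows essentially the same route as the paper: apply the 1-Lipschitz property of WLSE (Lemma~\ref{lemma_1-Lipschitz property of WLSE operator}) at each fixed state to reduce to a $\max_u$ over the linear backup, cancel $r$, pull out $\gamma$, and bound the remaining convex combination by $\lVert V^{\pi}_{j+1}-V^{\pi}_{j}\rVert_{\infty}$. Your explicit remark that the weight vector $\mu$ is the same in both WLSE evaluations is a point the paper leaves implicit but is indeed what makes Lemma~\ref{lemma_1-Lipschitz property of WLSE operator} applicable.
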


\begin{proof}
According to Lemma \ref{lemma_1-Lipschitz property of WLSE operator}, we can derive that
$$
\begin{aligned}
&{\left|\widetilde{\mathcal{T}}V^{\pi}_{j+1}(s)-\widetilde{\mathcal{T}}V^{\pi}_{j}(s) \right|} \\
& =\Bigg| \frac{1}{\rho}{\log \sum_{u}\mu \exp\bigg\{\rho\sum_{a}\pi\sum_{s'}p\big[r+\gamma V^{\pi}_{j+1}(s')\big]\bigg\}} - \\
&\qquad \frac{1}{\rho}{\log \sum_{u}\mu \exp\bigg\{\rho\sum_{a}\pi\sum_{s'}p\big[r+\gamma V^{\pi}_{j}(s')\big]\bigg\}}
\Bigg| \\
& \le \max_u \Big| \sum_{a}\pi\sum_{s'}p\big[r+\gamma V^{\pi}_{j+1}(s')\big] - \\
&\qquad \qquad \sum_{a}\pi\sum_{s'}p\big[r+\gamma V^{\pi}_{j}(s')\big] \Big| \\
&=\gamma \max_u \Big| \sum_{a}\pi\sum_{s'}p \big[V^{\pi}_{j+1}(s') -V^{\pi}_{j}(s')\big]\Big| \\
&\le \gamma \max_u {\Big| \sum_{a}\pi\sum_{s'}p 
\max_{s \in \mathcal{S}}\big[V^{\pi}_{j+1}(s) -V^{\pi}_{j}(s)\big]\Big|} \\
&=\gamma {\left \| V^{\pi}_{j+1}-V^{\pi}_{j} \right \|}_{\infty}.
\end{aligned}
$$
Since the above inequality holds for any state $s \in \mathcal{S}$, we finally have
$${\lVert\widetilde{\mathcal{T}}^{\pi}V^{\pi}_{j+1}-\widetilde{\mathcal{T}}^{\pi}V^{\pi}_{j} \rVert}_{\infty} \le \gamma {\lVert V^{\pi}_{j+1}-V^{\pi}_{j} \rVert}_{\infty}.$$
\end{proof}
Up to now, we can iteratively solve the fixed point of $\widetilde{\mathcal{T}}^{\pi}$, denoted as $v^\pi_\rho$, to conduct the policy evaluation, i.e.,
\begin{equation}
\label{eq.spi_pev}
\begin{aligned}
V^{\pi}_{j+1}(s)&=\widetilde{\mathcal{T}}^{\pi}V^{\pi}_{j}(s)\\
&=\frac{1}{\rho}{\log \sum_{u}\mu \exp\bigg\{\rho\sum_{a}\pi\sum_{s'}p\big[r+\gamma V^{\pi}_{j}(s')\big]\bigg\}},
\end{aligned}
\end{equation}
wherein $j$ is the iterative step and $v^{\pi}_\rho(s)=\lim_{j \to \infty}V_{j}^{\pi}(s)$. Furthermore, the approximation error of $v^\pi_\rho$ to the true value function $v^{\pi}$ can be bounded by $\rho$ and maximum value of $\mu$.
\begin{theorem}[Error bound of PEV]
\label{theorem_Error bound of PEV}
Suppose $v^{\pi}$, $v_{\rho}^{\pi}$ are the fixed points of $\mathcal{T}^{\pi}$ and $\widetilde{\mathcal{T}}^{\pi}$ respectively, i.e., $v^{\pi}=\mathcal{T}^{\pi}v^{\pi}$, $v_{\rho}^{\pi}=\widetilde{\mathcal{T}}^{\pi}v_{\rho}^{\pi}$, the approximate error is
\begin{equation}
\label{eq.v_bound}
\begin{aligned}
{\left \| v^{\pi}-v_{\rho}^{\pi} \right \|}_{\infty} \le \frac{1}{1-\gamma}\left\|\frac{\log \mu_m}{\rho}\right\|_\infty
\end{aligned}
\end{equation}
where $\mu_m$ is the maximum value of the adversarial policy $\mu$.
\end{theorem}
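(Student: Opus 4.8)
The plan is to compare the two fixed points by routing through a common argument and then exploiting the contraction established in Theorem~\ref{theorem_gamma_contraction_of_smoothing_operator}. Since $v^{\pi}=\mathcal{T}^{\pi}v^{\pi}$ and $v^{\pi}_{\rho}=\widetilde{\mathcal{T}}^{\pi}v^{\pi}_{\rho}$, I would first write ${\|v^{\pi}-v^{\pi}_{\rho}\|}_{\infty}={\|\mathcal{T}^{\pi}v^{\pi}-\widetilde{\mathcal{T}}^{\pi}v^{\pi}_{\rho}\|}_{\infty}$ and insert the auxiliary term $\widetilde{\mathcal{T}}^{\pi}v^{\pi}$, so that the triangle inequality gives
\[
{\|v^{\pi}-v^{\pi}_{\rho}\|}_{\infty}\le{\|\mathcal{T}^{\pi}v^{\pi}-\widetilde{\mathcal{T}}^{\pi}v^{\pi}\|}_{\infty}+{\|\widetilde{\mathcal{T}}^{\pi}v^{\pi}-\widetilde{\mathcal{T}}^{\pi}v^{\pi}_{\rho}\|}_{\infty}.
\]
The second term is immediately controlled by the $\gamma$-contraction of $\widetilde{\mathcal{T}}^{\pi}$, giving $\gamma{\|v^{\pi}-v^{\pi}_{\rho}\|}_{\infty}$, which I will later move to the left-hand side.

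The crux is the first term, which measures the per-step error of replacing the maximum by the WLSE at the single argument $v^{\pi}$. The key observation is that both $\mathcal{T}^{\pi}$ and $\widetilde{\mathcal{T}}^{\pi}$ act on the \emph{same} $v^{\pi}$, so for each fixed state $s$ the inner quantity $Q(s,u):=\sum_{a}\pi\sum_{s'}p[r+\gamma v^{\pi}(s')]$ is one and the same finite vector indexed by $u$. Applying $\mathcal{T}^{\pi}$ takes its maximum over $u$, whereas applying $\widetilde{\mathcal{T}}^{\pi}$ returns exactly $\operatorname{WLSE}$ of this vector with weights $\mu(\cdot\,|\,s)$. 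Hence Lemma~\ref{lemma_Error bound of WLSE operator} applies verbatim at each $s$, yielding $|\mathcal{T}^{\pi}v^{\pi}(s)-\widetilde{\mathcal{T}}^{\pi}v^{\pi}(s)|\le|\log\mu_m/\rho|$, and taking the supremum over $s$ bounds the first term by ${\|\log\mu_m/\rho\|}_{\infty}$.

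Finally I would collect the two estimates into the self-referential inequality ${\|v^{\pi}-v^{\pi}_{\rho}\|}_{\infty}\le{\|\log\mu_m/\rho\|}_{\infty}+\gamma{\|v^{\pi}-v^{\pi}_{\rho}\|}_{\infty}$ and solve for ${\|v^{\pi}-v^{\pi}_{\rho}\|}_{\infty}$, which produces the stated factor $1/(1-\gamma)$.

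I expect the one genuine subtlety, rather than the routine contraction and rearrangement steps, to be the correct reading of $\mu_m$ when Lemma~\ref{lemma_Error bound of WLSE operator} is invoked pointwise. In that lemma $w_m$ is the weight \emph{attached to the maximizing entry}, so the transfer to the present setting requires $\mu_m$ to denote the adversarial probability assigned to the per-state worst-case action $\arg\max_u Q(s,u)$, not the globally largest entry of $\mu$; otherwise the logarithmic factor would point in the wrong direction and the claimed bound would fail. I would therefore state this identification explicitly before passing to the supremum, after which the estimate on the first term, the contraction bound on the second term, and the algebraic solve all combine directly into \eqref{eq.v_bound}.
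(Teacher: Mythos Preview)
Your argument is correct and follows essentially the same three-step scheme as the paper: write the fixed-point difference, split via the triangle inequality, use a $\gamma$-contraction on one piece and Lemma~\ref{lemma_Error bound of WLSE operator} on the other, then rearrange. The only cosmetic difference is the choice of auxiliary term: you insert $\widetilde{\mathcal{T}}^{\pi}v^{\pi}$ and invoke the contraction of $\widetilde{\mathcal{T}}^{\pi}$ (Theorem~\ref{theorem_gamma_contraction_of_smoothing_operator}), whereas the paper inserts $\mathcal{T}^{\pi}v_{\rho}^{\pi}$ and uses the contraction of $\mathcal{T}^{\pi}$; both routes give the identical inequality. Your explicit flag about the meaning of $\mu_m$ is well taken---Lemma~\ref{lemma_Error bound of WLSE operator} bounds the error by the weight \emph{at the maximizing entry}, so reading $\mu_m$ as the per-state probability of the worst-case action is the interpretation that makes the step valid.
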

\begin{proof}
According to Lemma \ref{lemma_Error bound of WLSE operator}, Theorem \ref{theorem_gamma_contraction_of_smoothing_operator} and the triangle inequality, we can derive that

$$
\begin{aligned}
&{\left\| v^{\pi}-v_{\rho}^{\pi} \right\|}_{\infty} \\
&= {\left\| \mathcal{T}^{\pi}v^{\pi}- \widetilde{\mathcal{T}}^{\pi}v_{\rho}^{\pi}\right\|}_\infty \\
&\le {\left\| \mathcal{T}^{\pi}v^{\pi}- \mathcal{T}^{\pi}v_{\rho}^{\pi}\right \|}_\infty + {\left\| \mathcal{T}^{\pi}v_{\rho}^{\pi}- \widetilde{\mathcal{T}}^{\pi}v_{\rho}^{\pi}\right\|}_\infty \\
&\le \gamma{\left \| v^{\pi}-v_{\rho}^{\pi} \right \|}_{\infty} + \max_s \left|\frac{\log \max_u \mu(\cdot|s)}{\rho}\right| \\
&= \gamma{\left \| v^{\pi}-v_{\rho}^{\pi} \right \|}_{\infty} + \left\|\frac{\log \mu_m}{\rho}\right\|_\infty.
\end{aligned}
$$
Rearrange the above inequality we immediately conclude \eqref{eq.v_bound}.
\end{proof}

Once the smooth value function $v^\pi_\rho$ is acquired in PEV, the following PIM aims to find another round of better policies $\pi'$ and $\mu'$ by solving the following minimax problem
\begin{equation}
\label{eq.spi_pim}
\begin{aligned}
\pi', \mu'=\arg\min\limits_{\pi} \max \limits_{\mu} \sum_{a}\pi\sum_{u}\mu\sum_{s'}p\big[r+\gamma v^\pi_\rho(s)\big].
\end{aligned}
\end{equation}
Thus, PEV in \eqref{eq.spi_pev} and PIM in \eqref{eq.spi_pim} constitutes the smoothing policy iteration (SPI), wherein these two phases iterate to obtain the optimal value function $v_\rho^{*}$ and optimal policise $\pi^*$, $\mu^*$. 
As the introducing of the approximation operator, SPI actually acquires the nearly optimal NE solution of the adversarial bellman equation in \eqref{eq.def_bellman_eq}, wherein the error bounds between $v_\rho^{*}$ and $v^{*}$optimal solutions can be analysed explicitly.

\begin{lemma}[Error bound of Bellman equation\cite{patek1997stochastic}]
\label{lemma_error_bound_optimal_value}
Suppose $\epsilon$ is the approximate error bound of PEV and $\widetilde{v}^{*}$ is the corresponding nearly-optimal value function, the approximate error bound of $\widetilde{v}^{*}$ to $v^{*}$is
\begin{equation}
\begin{aligned}
{\left \| \widetilde{v}-v^{*} \right \|}_{\infty}
\le \frac{2\gamma}{(1-\gamma)^2} \epsilon,
\end{aligned}
\end{equation}
\end{lemma}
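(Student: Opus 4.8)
The plan is to recognize this as the classical error-propagation bound for approximate policy iteration and to instantiate it for the smoothing scheme, where the per-round policy-evaluation error is $\epsilon$ and the minimax improvement \eqref{eq.spi_pim} is solved exactly. First I would fix notation: write $\mathcal{T}^{\pi}$ for the worst-case operator \eqref{eq.api_self_consistency_condition} with fixed point $v^{\pi}$, and $\mathcal{T}=\min_{\pi}\mathcal{T}^{\pi}$ for the optimality operator with fixed point $v^{*}$; both are monotone $\gamma$-contractions in $\infty$-norm and satisfy the constant-shift identity $\mathcal{T}^{\pi}(v+c\,e)=\mathcal{T}^{\pi}v+\gamma c\,e$ for any scalar $c$ (with $e$ the all-ones vector), and likewise for $\mathcal{T}$. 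I would then model SPI as producing, at round $k$, a protagonist policy $\pi_k$ together with an approximate value $\hat v_k=v^{\pi_k}_{\rho}$ obeying $\lVert \hat v_k-v^{\pi_k}\rVert_{\infty}\le\epsilon$ by Theorem~\ref{theorem_Error bound of PEV}, followed by the exact improvement that makes $\pi_{k+1}$ greedy with respect to $\hat v_k$, i.e. $\mathcal{T}^{\pi_{k+1}}\hat v_k=\mathcal{T}\hat v_k$.

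The core of the argument is a single per-round inequality. Using the constant-shift property on the $\epsilon$-ball around $\hat v_k$, the greedy identity, and the fact that $\mathcal{T}\le\mathcal{T}^{\pi_k}$ with $\mathcal{T}^{\pi_k}v^{\pi_k}=v^{\pi_k}$, I would chain
$$\mathcal{T}^{\pi_{k+1}}v^{\pi_k}\le\mathcal{T}^{\pi_{k+1}}\hat v_k+\gamma\epsilon\,e=\mathcal{T}\hat v_k+\gamma\epsilon\,e\le\mathcal{T} v^{\pi_k}+2\gamma\epsilon\,e.$$
The factor $2$ is exactly the price of the evaluation error being paid twice: once when $v^{\pi_k}$ is replaced by $\hat v_k$ entering the improvement, and once when the improved operator is read back against the true value. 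Combining this with the contraction bound $\mathcal{T} v^{\pi_k}\le v^{*}+\gamma\lVert v^{\pi_k}-v^{*}\rVert_{\infty}e$ (since $\mathcal{T} v^{*}=v^{*}$) gives componentwise control of $\mathcal{T}^{\pi_{k+1}}v^{\pi_k}$ relative to $v^{*}$ carrying the single factor $2\gamma\epsilon$.

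From here I would propagate the error in two nested stages, each contributing one factor $1/(1-\gamma)$. The first comes from evaluation: iterating the monotone affine operator $\mathcal{T}^{\pi_{k+1}}$ to its fixed point $v^{\pi_{k+1}}$ turns a one-step surplus of size $c\,e$ into the accumulated surplus $\frac{c}{1-\gamma}e$ through the geometric series $\sum_m\gamma^m$. The second comes from iterating over the improvement rounds: the resulting recursion contracts the suboptimality $\lVert v^{\pi_k}-v^{*}\rVert_{\infty}$ toward a fixed level set by $\epsilon$, and summing the geometric tail yields the outer $1/(1-\gamma)$. Equivalently, and more cleanly, I would split the claim into (A) the fixed-point error of the smoothed iteration, $\lVert\widetilde v^{*}-v^{*}\rVert_{\infty}\le\frac{\epsilon}{1-\gamma}$, obtained by absorbing the per-round operator error through the $\gamma$-contraction, and (B) the greedy-performance estimate $\lVert v^{\widetilde\pi}-v^{*}\rVert_{\infty}\le\frac{2\gamma}{1-\gamma}\lVert\widetilde v^{*}-v^{*}\rVert_{\infty}$ for the policy $\widetilde\pi$ returned at convergence; identifying $\widetilde v^{*}$ with the value it attains, the product reproduces the stated $\frac{2\gamma}{(1-\gamma)^2}\epsilon$.

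The step I expect to be the main obstacle is this propagation across improvement rounds, not the one-step inequality. The difficulty is that the minimax-greedy map $\hat v\mapsto\pi_{k+1}$ is \emph{discontinuous} in its value argument, so one cannot simply invoke contraction of a single optimality-type operator; the bound must instead be carried componentwise using monotonicity and the affine constant-shift structure, and the delicate point is bookkeeping the two geometric summations so the constant lands exactly at $\frac{2\gamma}{(1-\gamma)^2}$ rather than a looser multiple. I would finally note that this recovers the classical approximate-policy-iteration bound of \cite{patek1997stochastic}, now with $\epsilon$ supplied concretely by the WLSE evaluation error $\frac{1}{1-\gamma}\lVert\frac{\log\mu_m}{\rho}\rVert_{\infty}$ of Theorem~\ref{theorem_Error bound of PEV}.
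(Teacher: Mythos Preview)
The paper does \emph{not} prove this lemma: it is quoted from \cite{patek1997stochastic} and used as a black box to derive Corollary~2. So there is no ``paper's own proof'' to compare against; what you have supplied is a self-contained sketch of the classical approximate-policy-iteration error bound, specialized to the minimax setting.

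Your first approach---the componentwise one-step inequality
\[
\mathcal{T}^{\pi_{k+1}}v^{\pi_k}\le\mathcal{T}\hat v_k+\gamma\epsilon\,e\le\mathcal{T} v^{\pi_k}+2\gamma\epsilon\,e,
\]
followed by the contraction bound $\mathcal{T} v^{\pi_k}\le v^{*}+\gamma\lVert v^{\pi_k}-v^{*}\rVert_{\infty}e$ and the two nested geometric sums---is exactly the standard Bertsekas--Tsitsiklis/Patek argument and is correct. The bookkeeping you flag as the main obstacle is routine once you observe $v^{\pi_k}\ge v^{*}$ componentwise (the protagonist's worst-case value dominates the minimax value), which lets you iterate $\mathcal{T}^{\pi_{k+1}}$ from $v^{\pi_k}$ down to its fixed point $v^{\pi_{k+1}}$ while carrying the surplus through the geometric series.

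Your alternative ``(A)/(B)'' decomposition, however, is muddled. Part (A) asserts $\lVert\widetilde v^{*}-v^{*}\rVert_{\infty}\le\epsilon/(1-\gamma)$ ``by absorbing the per-round operator error through the $\gamma$-contraction,'' but the smoothed \emph{optimality} operator is never defined in the paper, and the $\epsilon$ in the lemma is a policy-\emph{evaluation} error, not a perturbation of the optimality operator---so (A) does not follow from anything established. Part (B) then switches the object being bounded from $\widetilde v^{*}$ to $v^{\widetilde\pi}$, and your final ``identifying $\widetilde v^{*}$ with the value it attains'' is precisely the step that needs proof, not an identification you can make for free. Stick with the first approach; drop the second.
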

\noindent Therefore, we characterize the optimality of SPI by considering the error bound in Theorem \ref{theorem_Error bound of PEV}:
\begin{corollary}
Suppose $v_{\rho}^{*}$ is the optimal value function outputed by SPI and $v^{*}$ is the true optimal value function, the approximate error bound of $v_{\rho}^{*}$ to $v^{*}$ is
\begin{equation}
\label{eq.spi_optimal_value_error_bound}
\begin{aligned}
{\left \| v_{\rho}^{*}-v^{*} \right \|}_{\infty}
\le \frac{2\gamma}{(1-\gamma)^3} \left\|\frac{\log \mu_m}{\rho}\right\|_\infty.
\end{aligned}
\end{equation}
\end{corollary}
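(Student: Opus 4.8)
The plan is to obtain the claimed bound by a direct two-step composition of the results already established, treating SPI as an inexact instance of API in which every policy-evaluation phase is solved only approximately. First I would identify the quantity $\epsilon$ appearing in Lemma \ref{lemma_error_bound_optimal_value} with the per-iteration PEV error quantified in Theorem \ref{theorem_Error bound of PEV}. Concretely, at each round the fixed point $v_\rho^\pi$ of the smoothing operator $\widetilde{\mathcal{T}}^\pi$ replaces the exact worst-case value $v^\pi=\mathcal{T}^\pi v^\pi$, and Theorem \ref{theorem_Error bound of PEV} guarantees
\[
\|v^\pi - v_\rho^\pi\|_\infty \le \frac{1}{1-\gamma}\left\|\frac{\log\mu_m}{\rho}\right\|_\infty =: \epsilon .
\]
Thus $\epsilon$ is precisely the approximate PEV error bound required as the hypothesis of Lemma \ref{lemma_error_bound_optimal_value}, and the SPI output $v_\rho^*$ plays the role of the nearly-optimal value function $\widetilde{v}$ in that lemma.

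Second, I would invoke Lemma \ref{lemma_error_bound_optimal_value} with $\widetilde{v}=v_\rho^*$ and the above $\epsilon$, which yields $\|v_\rho^* - v^*\|_\infty \le \frac{2\gamma}{(1-\gamma)^2}\,\epsilon$. Substituting the expression for $\epsilon$ then gives
\[
\|v_\rho^* - v^*\|_\infty \le \frac{2\gamma}{(1-\gamma)^2}\cdot\frac{1}{1-\gamma}\left\|\frac{\log\mu_m}{\rho}\right\|_\infty = \frac{2\gamma}{(1-\gamma)^3}\left\|\frac{\log\mu_m}{\rho}\right\|_\infty ,
\]
which is exactly \eqref{eq.spi_optimal_value_error_bound}. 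No further estimation is needed; the corollary is just the product of the two previously proven error factors, one contributed by the smoothing approximation of the worst-case operator and one by the propagation of that error into the optimal value through the policy-iteration recursion.

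The step I expect to require the most care is the identification in the first paragraph, namely justifying that the single-round bound of Theorem \ref{theorem_Error bound of PEV} may legitimately serve as the uniform $\epsilon$ demanded by Lemma \ref{lemma_error_bound_optimal_value}. The adversarial policy $\mu$, and hence its maximal value $\mu_m$, varies from one PIM step to the next, so strictly speaking one should interpret $\mu_m$ as a uniform upper bound on $\max_u \mu(\cdot\mid s)$ over all policies generated along the iteration (equivalently, take the worst such value), ensuring the PEV error is controlled by the same constant $\epsilon$ at every round. Once this uniformity is secured, the contraction-based argument underlying Lemma \ref{lemma_error_bound_optimal_value} applies without modification and the composition is immediate.
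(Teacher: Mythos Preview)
Your proposal is correct and matches the paper's approach exactly: the paper presents the corollary without an explicit proof, simply as the composition of Theorem~\ref{theorem_Error bound of PEV} (supplying $\epsilon$) with Lemma~\ref{lemma_error_bound_optimal_value}. Your additional remark about needing $\mu_m$ to be interpreted uniformly across iterations is a valid caveat that the paper itself does not address.
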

\noindent It is obvious that SPI converges to the NE solution as $\rho \to \infty$.

\subsection{The influence of weight function}
Here, we analyze the influence of weight function choice to the approximation error under tabular case.
Specially, we choose the adversarial policy $\mu$ as the weight function in \eqref{eq.spi_self_consistency_condition}, which will provably lead to a more accurate approximation to the maximum operation.
To that end, we firstly show that $\widetilde{\mathcal{T}}^{\pi}$ is a monotonous operator.
\begin{lemma}[Monotonicity of $\widetilde{\mathcal{T}}^{\pi}$]
\label{lemma_error_monoto}
Let $V^{\pi}_j, W^{\pi}_j \in \mathbb{R}$ such that $V^{\pi}_j(s) \ge W^{\pi}_j(s)$ holds for $s \in \mathcal{S}$ and 
$V^{\pi}_{j+1}(s)=\widetilde{\mathcal{T}}^{\pi}V^{\pi}_j(s'), 
W^{\pi}_{j+1}(s)=\widetilde{\mathcal{T}}^{\pi}W^{\pi}_j(s')$. Then,
$$
V^{\pi}_{j+1}(s) \ge W^{\pi}_{j+1}(s), \forall s \in \mathcal{S}
$$
\end{lemma}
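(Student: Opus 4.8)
The plan is to recognize that $\widetilde{\mathcal{T}}^{\pi}$ is an order-preserving map because it is the composition of several elementary operations, each of which is monotone in the value argument. Concretely, I would rewrite the operator as $\widetilde{\mathcal{T}}^{\pi}V(s)={\rm WLSE}\big(Q_V(s,\cdot)\big)$, where for every adversarial action $u$ the inner term is $Q_V(s,u)=\sum_{a}\pi(a|s)\sum_{s'}p(s'|s,a,u)\big[r+\gamma V(s')\big]$ and the WLSE is taken over $u$ with weights $\mu(u|s)$. The strategy is then simply to propagate the hypothesis $V^{\pi}_j(s)\ge W^{\pi}_j(s)$ through these stages.

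First I would establish the intermediate inequality $Q_V(s,u)\ge Q_W(s,u)$ for every $s$ and $u$. This follows immediately from $V^{\pi}_j(s')\ge W^{\pi}_j(s')$ for all $s'$ together with the nonnegativity of the discount factor $\gamma\ge 0$, the protagonist policy $\pi(a|s)\ge 0$, and the transition kernel $p(s'|s,a,u)\ge 0$: subtracting the two expressions leaves $\gamma\sum_{a}\pi\sum_{s'}p\,[V^{\pi}_j(s')-W^{\pi}_j(s')]\ge 0$, since every factor and the bracketed difference are nonnegative.

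Next I would invoke the monotonicity of the WLSE function itself in each of its arguments. This is already available from the gradient computed in the proof of Lemma~\ref{lemma_1-Lipschitz property of WLSE operator}, where $\partial\,{\rm WLSE}(X)/\partial x_i=w_i\exp(\rho x_i)/\sum_i w_i\exp(\rho x_i)\ge 0$ because $\rho>0$ and $w_i\ge 0$; hence WLSE is nondecreasing in every coordinate. Applying this coordinatewise with weights $w_u=\mu(u|s)$ and entries $x_u=Q_V(s,u)\ge Q_W(s,u)=y_u$ gives ${\rm WLSE}(Q_V(s,\cdot))\ge{\rm WLSE}(Q_W(s,\cdot))$, that is $V^{\pi}_{j+1}(s)\ge W^{\pi}_{j+1}(s)$ for every $s\in\mathcal{S}$, which is the claim. (Equivalently, one may avoid the gradient and argue directly: $\exp$ is increasing and $\rho>0$, the weights $\mu\ge 0$ preserve the sum, and $\log$ together with the $1/\rho$ factor are increasing.)

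The argument is essentially bookkeeping, so the only point requiring care---rather than a genuine obstacle---is to track the sign conditions that make each stage order-preserving: $\rho>0$ for the exponential and the outer $1/\rho$ factor, and the nonnegativity of $\mu$, $\pi$, $p$, and $\gamma$. I would also note, if desired, that all sums and exponentials are finite under the standing assumption of bounded reward and $\gamma\in[0,1)$, so no stage produces an ill-defined value; with this in place the chain of monotone steps yields the result directly, without any appeal to the contraction property.
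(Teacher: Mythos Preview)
Your proposal is correct and follows essentially the same route as the paper: both arguments first show that the inner affine term $\sum_a\pi\sum_{s'}p[r+\gamma V(s')]$ is monotone in $V$ by nonnegativity of $\pi,p,\gamma$, and then that the outer WLSE preserves the order. The paper phrases the second step as $\tfrac{1}{\rho}\log$ of a ratio of sums (termwise $\ge 1$), whereas you invoke coordinatewise monotonicity of WLSE via its nonnegative partial derivatives (or, equivalently, the direct $\exp$--sum--$\log$ chain you mention); these are the same observation in different clothing.
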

\begin{proof}
As $V^{\pi}_j(s') \ge W^{\pi}_j(s')$, we can conclude for any $u \sim \mu$
\begin{equation}
\nonumber
\begin{aligned}
&\mu(u) \exp\bigg\{\rho\sum_{a}\pi\sum_{s'}p\big[r+\gamma V^{\pi}_j(s')\big]\bigg\}\\
&\ge \mu(u) \exp\bigg\{\rho\sum_{a}\pi\sum_{s'}p\big[r+\gamma W^{\pi}_j(s')\big]\bigg\} \ge  0
\end{aligned}
\end{equation}
Then,
\begin{equation}
\nonumber
\begin{aligned}
&V^{\pi}_{j+1}(s)-W^{\pi}_{j+1}(s)=\widetilde{\mathcal{T}}^{\pi}V^{\pi}_j(s) -\widetilde{\mathcal{T}}^{\pi}W^{\pi}_j(s) \\
&=\frac{1}{\rho} \log \frac{\sum_{u}\mu \exp\bigg\{\rho\sum_{a}\pi\sum_{s'}p\big[r+\gamma V^{\pi}_j(s')\big]\bigg\}}{\mu(u) \exp\bigg\{\rho\sum_{a}\pi\sum_{s'}p\big[r+\gamma V^{\pi}_j(s')\big]\bigg\}} \\
&\ge 0
\end{aligned}
\end{equation}
\end{proof}
\noindent
Accordingly, as WLSE function provides a lower approximation to maximum as shown in Lemma \ref{lemma_Error bound of WLSE operator}, a good initialization $V^{\pi}_0$ will generate a sequence of $\{V^{\pi}_j\}$ whose fixed point $v^{\pi}_{\rho}$ will be closer to $v^{\pi}$.

On the other side, we can see that $\mu$ is always updated by the minimax formulation in \eqref{eq.spi_pim}, wherein two dual linear programs will give the solutions under tabular case. Denoting the NE value of \eqref{eq.spi_pim} as $v^{\pi',\mu'}_{\rho}$, we have
\begin{equation}
\nonumber
\begin{aligned}
&\min_{\pi',\mu'} \quad v^{\pi',\mu'} \\
\textup{s.t.}\quad &\sum_{u}\mu'\sum_{s'}p \big[r+\gamma v^\pi_\rho(s')\big]
\geq v^{\pi',\mu'},  \forall{a \in \mathcal{A}}\\
&\sum_{u \in \mathcal{U}}\mu'(u)  =  1 \\
&\mu'(u) \geq  0,   \forall{u \in \mathcal{U}}
\end{aligned}
\end{equation}
Solving this problem will generate $v^{\pi',\mu'}$ and $\mu'$. And its dual linear programming can be derived as
\begin{equation}
\nonumber
\begin{aligned}
&\max_{\pi',\mu'} \quad v^{\pi',\mu'} \\
\textup{s.t.}\quad &\sum_{a}\pi'\sum_{s'}p \big[r+\gamma v^\pi_\rho(s')\big]
\leq v^{\pi',\mu'},  \forall{u \in \mathcal{U}}\\
&\sum_{a \in \mathcal{A}}\pi'(a)  =  1 \\
&\pi'(a) \geq 0,   \forall{a \in \mathcal{A}}
\end{aligned}
\end{equation}
Similarly, we can attain $\mu'$ from this programming. According to the theorem of complementary slackness of dual problems \cite{gill2021numerical}, we have
\begin{equation}
\label{eq.comp_slack}
\begin{aligned}
\mu'(u)\big[\sum_{a}\pi'\sum_{s'}p \big[r+\gamma v^\pi_\rho(s')\big]
- v^{\pi',\mu'}\big] = 0, \forall{u \in \mathcal{U}}
\end{aligned}
\end{equation}
We can see if some $u$ makes $\sum_{a}\pi\sum_{s'}p \big[r+\gamma v^\pi_\rho(s')\big]
< v^{\pi',\mu'}$, then $\mu'(u)=0$ always holds. This means that the adversary policy only assigns non-zero probability for such adversarial actions satisfying $v^{\pi',\mu'}(s)=\sum_{a}\pi'\sum_{s'}p \big[r+\gamma v^\pi_\rho(s')\big]$.

Additionally, as $v^{\pi',\mu'}$ is the NE point, any other policy $\mu^{\#}$ will correspond to a smaller value function:
\begin{equation}
\nonumber
\begin{aligned}
v^{\pi',\mu'}&=\sum_{a}\pi'\sum_{u}\mu'\sum_{s'}p\big[r+\gamma v^\pi_\rho('s)\big] \\
&\ge \sum_{a}\pi'\sum_{u}\mu^{\#}\sum_{s'}p\big[r+\gamma v^\pi_\rho(s')\big]
\end{aligned}
\end{equation}
Specially, we assume $\mu^{\#}$ as the deterministic policy in which it only takes the adversarial action resulting in the biggest value function, i.e., $\mu^{\#}(u_m)=1.0$ where $u_m$ satisfies
\begin{equation}
\nonumber
\begin{aligned}
u_m = \arg \max\limits_{u \in \mathcal{U}} \bigg \{\sum_{a}\pi'\sum_{s'}p\big[r+\gamma v_{\rho}^{\pi}(s')\big]\bigg\}.
\end{aligned}
\end{equation}
And we can get 
$$v^{\pi',\mu'} \ge \max\limits_{u \in \mathcal{U}} \bigg \{\sum_{a}\pi'\sum_{s'}p\big[r+\gamma v_{\rho}^{\pi}(s')\big]\bigg\}.$$
Meanwhile, as $v^{\pi',\mu'}$ is the weighted average value w.r.t. the current $\pi'$, that is,
$$v^{\pi',\mu'} \le \max\limits_{u \in \mathcal{U}} \bigg \{\sum_{a}\pi'\sum_{s'}p\big[r+\gamma v_{\rho}^{\pi}(s')\big]\bigg\}.$$
Thus, it always holds that
\begin{equation}
\label{eq.max_ne}
\begin{aligned}
v^{\pi',\mu'} = \max\limits_{u \in \mathcal{U}} \bigg \{\sum_{a}\pi'\sum_{s'}p\big[r+\gamma v_{\rho}^{\pi}(s')\big]\bigg\}
\end{aligned}
\end{equation}
Combining \eqref{eq.comp_slack} and \eqref{eq.max_ne}, we conclude that choosing $\mu$ as the weight function will assign high probability over the set of the worst adversarial actions which is closer to maximum operation and thus provide a fair initialization for the next round of PEV. According to Lemma \ref{lemma_error_monoto}, this will eventually lead to a more accurate approximation and thus provably enhance the convergence speed as demonstrated in \ref{sec:simu_two-state MGs}.

\subsection{Smoothing Adversarial Actor-Critic (SaAC)}
In order to deal with complex tasks with continuous state and action spaces, we further propose the smoothing adversarial actor-critic (SaAC) algorithm based on the developed SPI method. We will consider a parameterized value function $V_{\psi}(s)$, a parameterized protagonist policy $\pi_{\theta}(\cdot|s)$ and adversarial policy $\mu_{\phi}(\cdot|s)$, wherein $\psi$, $\theta$ and $\phi$ are their trainable parameters respectively.
Based on PEV step in \eqref{eq.spi_self_consistency_condition},
$V_{\psi}(s)$ can be directly trained by minimizing the mean square error between the output of value network and the corresponding target value. Therefore, the objective of $V_{\psi}(s)$ can be written as
\begin{equation}
\label{eq.obj_saac_value}
\begin{aligned}
\min_{\psi} J_{V}(\psi)
=\frac{1}{2}\mathbb{E}_{s}\big[y_{\overline{\psi}}(s)-V_\psi(s)\big]^2, \\
\end{aligned}
\end{equation}
where $y_{\overline{\psi}}(s)$ represents the target state value of $s$ and $\overline{\psi}$ is the parameters of the target network. Concretely, it can be calculated by using the smoothing bellman operator in \eqref{eq.spi_self_consistency_condition}
\begin{equation}
\label{eq.target_value}
\begin{aligned}
y_{\overline{w}}(s)=\frac{1}{\rho}{\log \sum_{u}\mu \exp\bigg\{\rho\sum_{a}\pi\sum_{s'}p\big[r+\gamma V_{\overline{w}}(s')\big]\bigg\}}
\end{aligned}
\end{equation}
Obviously, this target value makes a important role for stable training of value update, which can be estimated by the samples generated by environment model $p$, and the protagonist policy $\pi_\theta$ and adversarial policy $\mu_\phi$, as shown in Algorithm \ref{alg:target_value}.
To that end, we sample a bunch of samples for current initial state corresponding to different adversarial actions and then the target is calculated by \eqref{eq.target_value} to estimate the expectation.

Then, the parameters of $V_{\psi}$ are updated by
\begin{equation}
\label{eq.saac_value_update}
\begin{aligned}
\psi \leftarrow \psi - \beta_{\psi} \nabla_{\psi} J_{V}(\psi),
\end{aligned}
\end{equation}
where $\beta_{\psi}$ is the learning rate of value function and $\nabla_{\psi} J_{V}({\psi})$ is the gradients of $J_V({\psi})$ w.r.t. ${\psi}$, i.e.,
$$\nabla_{\psi} J_{V}({\psi})
=-\mathbb{E}_{s}\bigg[\big(y_{\overline{\psi}}(s)-V_{\psi}(s)\big)\nabla_{\psi}V_{\psi}(s)\bigg].$$
In addition, the target network mentioned above adopts a slow-moving update rule to stabilize the learning process as
\begin{equation}
\nonumber
\begin{aligned}
\overline{\psi} \leftarrow \tau {\psi} + (1-\tau)\overline{\psi},
\end{aligned}
\end{equation}
where $\tau$ is the temperature to adjust the updating speed.

\begin{algorithm}[!htb]
\renewcommand{\algorithmicrequire}{\textbf{Input:}}
\renewcommand{\algorithmicensure}{\textbf{Output:}}
\caption{Target State Value}
\label{alg:target_value}
\begin{algorithmic}
\REQUIRE policies $\pi_{\theta}$, $\mu_{\phi}$, environment model $p$, Buffer $\mathcal{B}$, target value network $V_{\overline{\psi}}$
\STATE Initialize the initial state from $\mathcal{B}$, $s \sim \mathcal{B}$
\STATE Initialize number of trajectories $K$
\STATE Initialize value list $\mathbb{L}$
\STATE
    \FOR{each sample in $K$}
        \STATE Obtain $u \sim \mu_\phi(s)$, $a \sim \pi_\theta(s)$
        \STATE Calculate the next state with environment model $s'\sim p$
        \STATE Calculate the target value $y_i = r+\gamma V_{\overline{\psi}}(s')$ for this sample $<s, a, u, s', r>$
        \STATE Add $y_i$ into the value list $\mathbb{L}$
    \ENDFOR
\STATE
\STATE Calculate $y_{\overline{\psi}}$ using the WLSE function over $\mathbb{L}$\\
$$y_{\overline{\psi}}(s)=\frac{1}{\rho}{\log \frac{1}{K} \sum_{i=1}^{K}\exp\big\{\rho\big(r+\gamma V_{\overline{\psi}}(s')\big)\big\}}$$
\ENSURE $y_{\overline{\psi}}$
\end{algorithmic}
\end{algorithm}

As for policy update, $\pi_{\theta}$ and $\mu_{\phi}$ formalize a minimax problem based on \eqref{eq.spi_pim}, i.e.,
\begin{equation}
\label{eq.obj_pro_policy}
\nonumber
\begin{aligned}
\min\limits_{\theta} \max \limits_{\phi} J_{\pi,\mu}(\theta,\phi)=\mathbb{E}_{s}\left[r+\gamma V_{\psi}(s')\right].
\end{aligned}
\end{equation}
For this problem, the gradient-descent-ascent methods are widely adopted to update the networks simultaneously, 
\begin{equation}
\label{eq.saac_policy_upate}
\begin{aligned}
\theta &\leftarrow \theta - \beta_\theta \partial_{\theta}J_{\pi,\mu}(\theta, \phi), \\
\phi &\leftarrow \phi + \beta_\phi \partial_{\phi}J_{\pi,\mu}(\theta, \phi),
\end{aligned}
\end{equation}
where $\beta_\theta$ and $\beta_\phi$ are the learning rates of protagonist policy and adversarial policy, respectively. 
Here, we can further derive the gradients $\partial_{\theta}J_{\pi,\mu}$ and $\partial_{\phi}J_{\pi,\mu}$ in SaAC within the model-based framework,
\begin{equation}
\label{eq.protag_grad}
\begin{aligned}
\partial_{\theta}J_{\pi,\mu}&= \partial_{\theta}\mathbb{E}_{s}\left[r+\gamma V_{\psi}(s')\right] \\
&=\mathbb{E}_{s}\left[\frac{\partial{a}^{\rm T}}{\partial{\theta}}\frac{\partial{r}}{\partial{a}}+\gamma \frac{\partial{a}^{\rm T}}{\partial{\theta}}\frac{\partial{s'}^{\rm T}}{\partial{a}}\frac{\partial{V_{\psi}}}{\partial{s'}}\right] \\
&=\mathbb{E}_{s}\left[\frac{\partial{\pi_{\theta}}^{\rm T}}{\partial{\theta}}\big(\frac{\partial{r}}{\partial{a}}+\gamma \frac{\partial{p}^{\rm T}}{\partial{a}}\frac{\partial{V_{\psi}}}{\partial{s'}}\big)\right]
\end{aligned}
\end{equation}
Note that the second equation comes from the fact that $s' \sim p(s,a,u)$ and $a \sim \pi_\theta$. Besides, $\frac{\partial{r}}{\partial{a}}$ and $\frac{\partial{p}^{\rm T}}{\partial{a}}$ contribute to the gradient w.r.t. the parameters of protagonist policy, and their usage can effectively reduce the computational complexity to estimate policy gradients.
Similarly, the gradient of $\mu_\phi$ can be derived as
\begin{equation}
\label{eq.adv_grad}
\begin{aligned}
\partial_{\phi}J_{\pi,\mu}=\mathbb{E}_{s}\left[\frac{\partial{\mu_{\phi}}^{\rm T}}{\partial{\phi}}\big(\frac{\partial{r}}{\partial{u}}+\gamma \frac{\partial{p}^{\rm T}}{\partial{u}}\frac{\partial{V_{\psi}}}{\partial{s'}}\big)\right]
\end{aligned}
\end{equation}

However, the issue in calculating \eqref{eq.protag_grad} and \eqref{eq.adv_grad} is that we can not directly obtain $\frac{\partial{\pi_{\theta}}^{\rm T}}{\partial{\theta}}$ and $\frac{\partial{\mu_{\phi}}^{\rm T}}{\partial{\phi}}$in zero-sum MGs with the stochastic policy $\pi_{\theta}$ and $\mu_{\phi}$. In this condition, the sample operation of actions from a distribution will lead to the gradient fracture and thus the reparameterization trick is proposed \cite{duan2022dsac} to derive the policy gradient.
Assuming $$a=f_{\theta}(\xi;s), u=h_{\phi}(\eta;s)$$ 
where $f_{\theta}$ and $h_{\phi}$ are the reparameterized policies, $\xi$ and $\eta$ are auxiliary variables which are sampled from some fixed distribution, we can derive the executable gradient of the protagonist policy,
\begin{equation}
\label{eq.protag_repa_grad}
\begin{aligned}
\partial_{\theta}J_{\pi,\mu}=
\Exp_{\substack{s,\xi}}\left[\frac{\partial{f_{\theta}(\xi;s)}^{\rm T}}{\partial{\theta}}\big(\frac{\partial{r}}{\partial{f_{\theta}(\xi;s)}}+\gamma \frac{\partial{p}^{\rm T}}{\partial{f_{\theta}(\xi;s)}}\frac{\partial{V_{\psi}}}{\partial{s'}}\big)\right]
\end{aligned}
\end{equation}
Similarly, the gradient $\partial_{\phi}J_{\pi,\mu}$ can be derived with the reparameterization and model-based chain rules:
\begin{equation}
\label{eq.adv_repa_grad}
\begin{aligned}
\partial_{\phi}J_{\pi,\mu}=
\Exp_{\substack{s,\eta}}\left[\frac{\partial{h_{\phi}(\eta;s)}^{\rm T}}{\partial{\theta}}\big(\frac{\partial{r}}{\partial{h_{\phi}(\eta;s)}}+\gamma \frac{\partial{p}^{\rm T}}{\partial{h_{\phi}(\eta;s)}}\frac{\partial{V_{\psi}}}{\partial{s'}}\big)\right]
\end{aligned}
\end{equation}
Finally, we can conclude the details of SaAC in Algorithm \ref{alg:saac} and show the diagram in Fig.~\ref{fig.saac_diag}.
\begin{figure}[!htbp]
\centering
\captionsetup[subfigure]{justification=centering}
\includegraphics[width=1.0\linewidth]{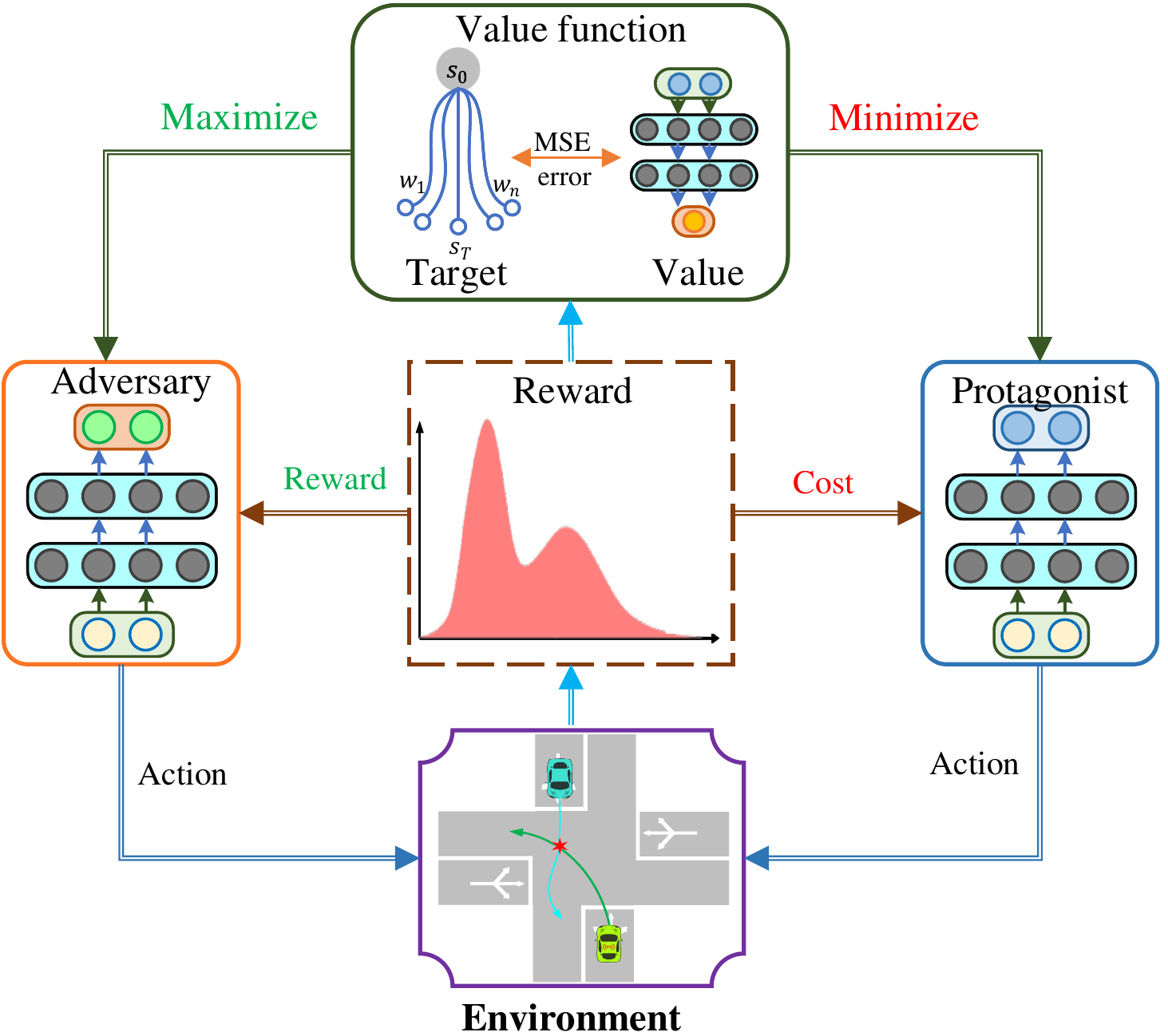}
\caption{SaAC diagram. Value function, protagonist policy and adversarial policy are carried by neural networks respectively. SaAC first updates the smoothing value based on the a bunch of samples collected by both policies. Then, the output of the value network is used to guide the update of policies by gradient-descent and ascent.}
\label{fig.saac_diag}
\end{figure}
\begin{algorithm}[!htb]
\caption{Smooth adversarial Actor-Critic (SaAC)}
\label{alg:saac}
\begin{algorithmic}
\STATE Initialize $\psi$, $\theta$, and $\phi$
\STATE Initialize learning rates $\beta_{\psi}, \beta_\theta$ and $\beta_{\phi}$
\STATE Initialize main iteration step $k=0$
\STATE Initialize adversarial policy updating interval $M$
\STATE Initialize buffer $\mathcal{B}\leftarrow\emptyset$
\REPEAT
\STATE // Sampling
    \FOR{each environment episode}
        \STATE $t=0$
        \STATE Randomly select an initial state $s$
        \FOR{each environment step}
            \STATE Sample adversarial action $u_t \sim\mu_\phi(\cdot|s_t)$
            \STATE Sample protagonist action $a_t\sim~\pi_{\theta}(\cdot|s_t)$
            \STATE Obtain $s_{t+1}\sim p$ and $r_t$
            \STATE Add the sample into buffer $\mathcal{B}\cup\{s_t, a_t, r_t, s_{t+1}\}$
            \STATE $t=t+1$
        \ENDFOR
    \ENDFOR
\STATE // Optimizing
\STATE Fetch a batch of samples from $\mathcal{B}$, compute $y_{\overline{\psi}}$ with Algorithm \ref{alg:target_value}, compute $J_{\pi,\mu}$ with $\mu_\phi$, $\pi_\theta$ and $p$
\STATE Update the parameters of value function \\
\qquad \qquad $w  \leftarrow w  - \beta_{w}\nabla_{w }J_{V}$
\STATE Update the parameters of policy \\
\qquad \qquad $\theta  \leftarrow \theta  - \beta_{\theta}\partial_{\theta} J_{\pi,\mu}$
\IF{$k \% M = 0$}
\STATE Update the parameters of adversary policy \\
\qquad \quad $\phi  \leftarrow \phi + \beta_{\phi}\partial_{\phi}J_{\pi,\mu}$
\ENDIF
\STATE $k=k+1$
\UNTIL Convergence
\end{algorithmic}
\end{algorithm}
\section{Simulation verification}
\label{sec.simulation_verification}
In this section, we carefully design two different tasks, the two-state MGs and the robust path tracking, which respectively corresponds to the tabular and function approximation settings, to verify the approximation performance of SPI and the training stability of SaAC.

\subsection{Two-state MGs}
\label{sec:simu_two-state MGs}
We demonstrate the approximation performance of SPI by introducing a two-state zero-sum MGs, which was originally proposed as a classical counterexample to illustrate the divergence of NPI \cite{van1978discounted}. With discrete and countable state and action spaces, SPI is able to solve the value and policies exactly in a state-by-state manner, making it unnecessary to employ any function approximator.
As shown in Fig.~\ref{fig.two-state MGs}, there are only two states in the state space $\mathcal{S}=\{s_1,s_2\}$, and $s_2$ is an absorbing state, that is, the agent will always keep motionless at $s_2$ no matter what actions are adopted. Moreover, there are only two corresponding elements in the protagonist action and the adversarial action space, i.e., $\mathcal{A}=\{a_1,a_2\}$ and $\mathcal{U}=\{u_1,u_2\}$. If the current state is $s_1$, the agent stays at $s_1$ regardless of the action pair in $\left\{(a_1,u_1),(a_2,u_1),(a_2,u_2)\right\}$, and receives rewards -3, -2, and -1 respectively. However, when taking the action pair $(a_1,u_2)$, the agent either stays at $s_1$ with probability $\frac{1}{3}$ or transits to $s_2$ with probability $\frac{2}{3}$ and receives reward -6 in both situations.
\begin{figure}[!htbp]
\centering
\captionsetup[subfigure]{justification=centering}
\includegraphics[width=0.85\linewidth]{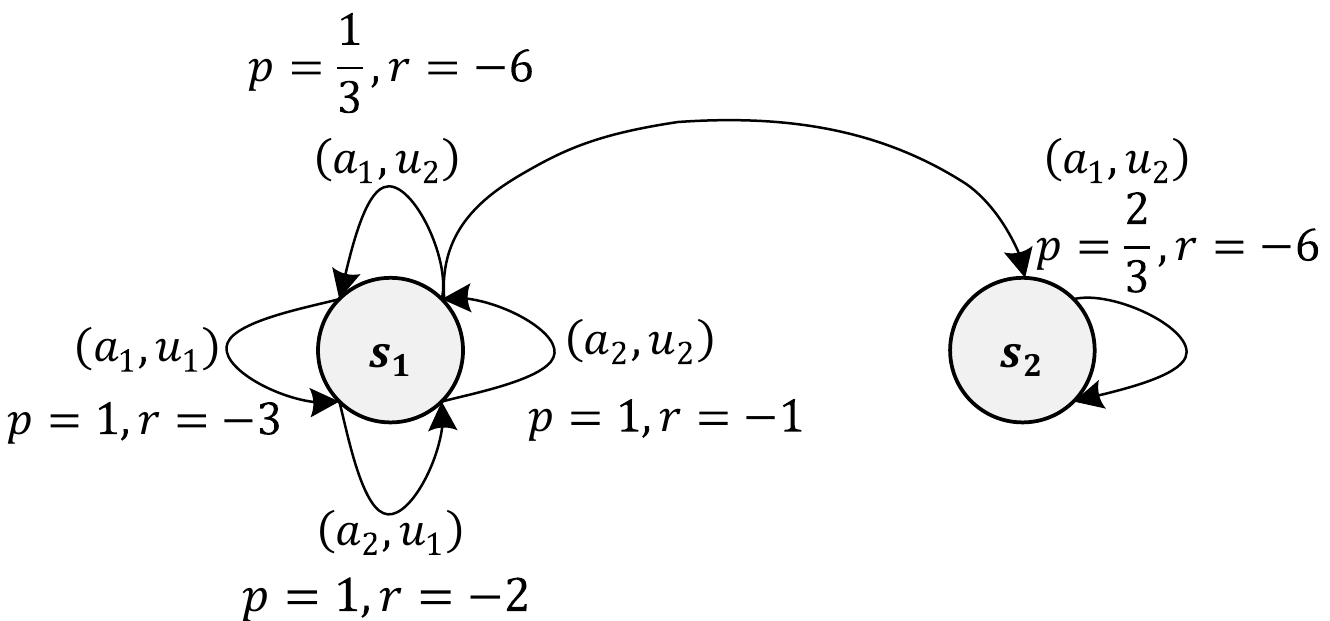}
\caption{Two-state zero-sum MGs.}
\label{fig.two-state MGs}
\end{figure}

Since $s_2$ is an absorbing state with zero cost, its state value $v^{\pi}(s_2)$ is always zero, and thus we only focus on the value function at $s_1$. 
Here, we firstly explore the difference of API and SPI, wherein their PEV should be accomplished iteratively according to \eqref{eq.api_pev} and \eqref{eq.spi_pev} given a pair of policies $\pi$ and $\mu$.
Specifically, we consider $\rho\in\{1.0,5.0,10.0,20.0\}$ respectively to further investigate the role of $\rho$ on the approximation performance.
Besides, to verify the influence of weights, the SPI-u algorithm is introduced under $\rho=10$ wherein a uniform distribution serves as the weight function regardless of the adversarial policy.
At the beginning, we initialize both the protagonist and adversarial policy as the stochastic policies, i.e., $$\pi_0(a_1|s_1)=0.50, \pi_0(a_2|s_1)=0.50,$$
$$\mu_0(u_1|s_1)=0.45, \mu_0(u_2|s_1)=0.55.$$
For this policy pair $(\pi_0, \mu_0)$, Fig.~\ref{fig.two-state mg pev 1st round} depicts the value varying with the iterative step during PEV. We can see that all methods basically converge to their fixed points when the step reaches more than 10, and SPI-based methods indeed obtain different approximation value depend on $\rho$ and weight. Furthermore, we list their convergent values in Table \ref{tab.appr_error_1}, wherein API converges to the true value of current policies for $s_1$, i.e., $v^{\pi_0}(s_1)=-6.9999$.
Obviously, the approximation of SPI becomes more accurate as $\rho$ increases, which is consistent with our previous theoretical analysis. SPI estimates the value as $-7.0693$ with $\rho=20.0$, where the approximation error to the value estimate from API is $\Delta e=0.98\%$.
For the same $\rho=10$, SPI achieves a better performance than SPI-u indicating the adversarial policy as the weight function will reduce the error bound efficiently than a random uniform distribution.
\begin{figure}[!htbp]
\centering
\captionsetup[subfigure]{justification=centering}
\subfloat[PEV for $\pi_0(s_1)$]{\label{fig.two-state mg pev 1st round}\includegraphics[width=0.8\linewidth]{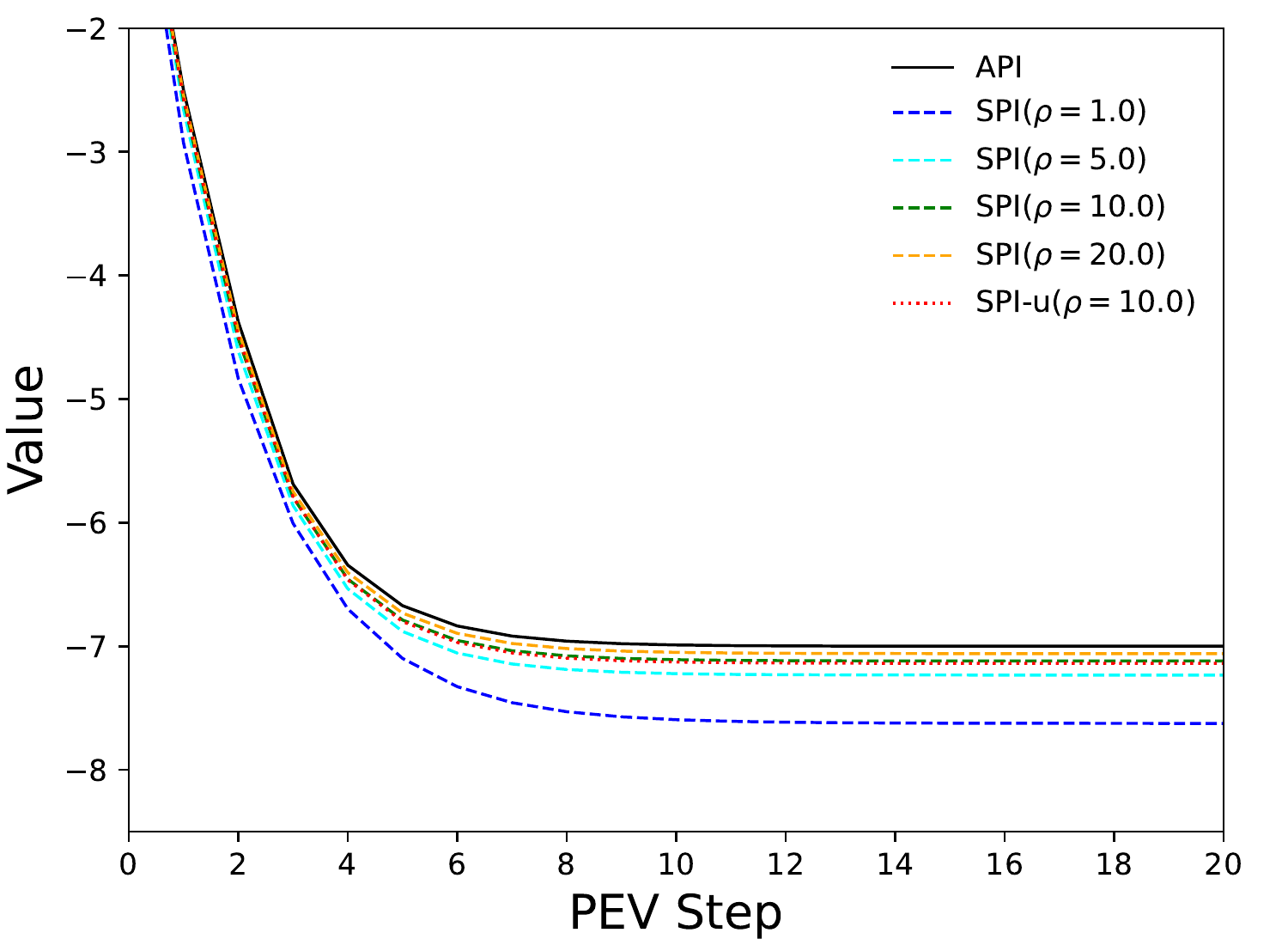}} \\
\subfloat[PEV for $\pi_1(s_1)$]{\label{fig.two-state mg pev 2nd round}\includegraphics[width=0.8\linewidth]{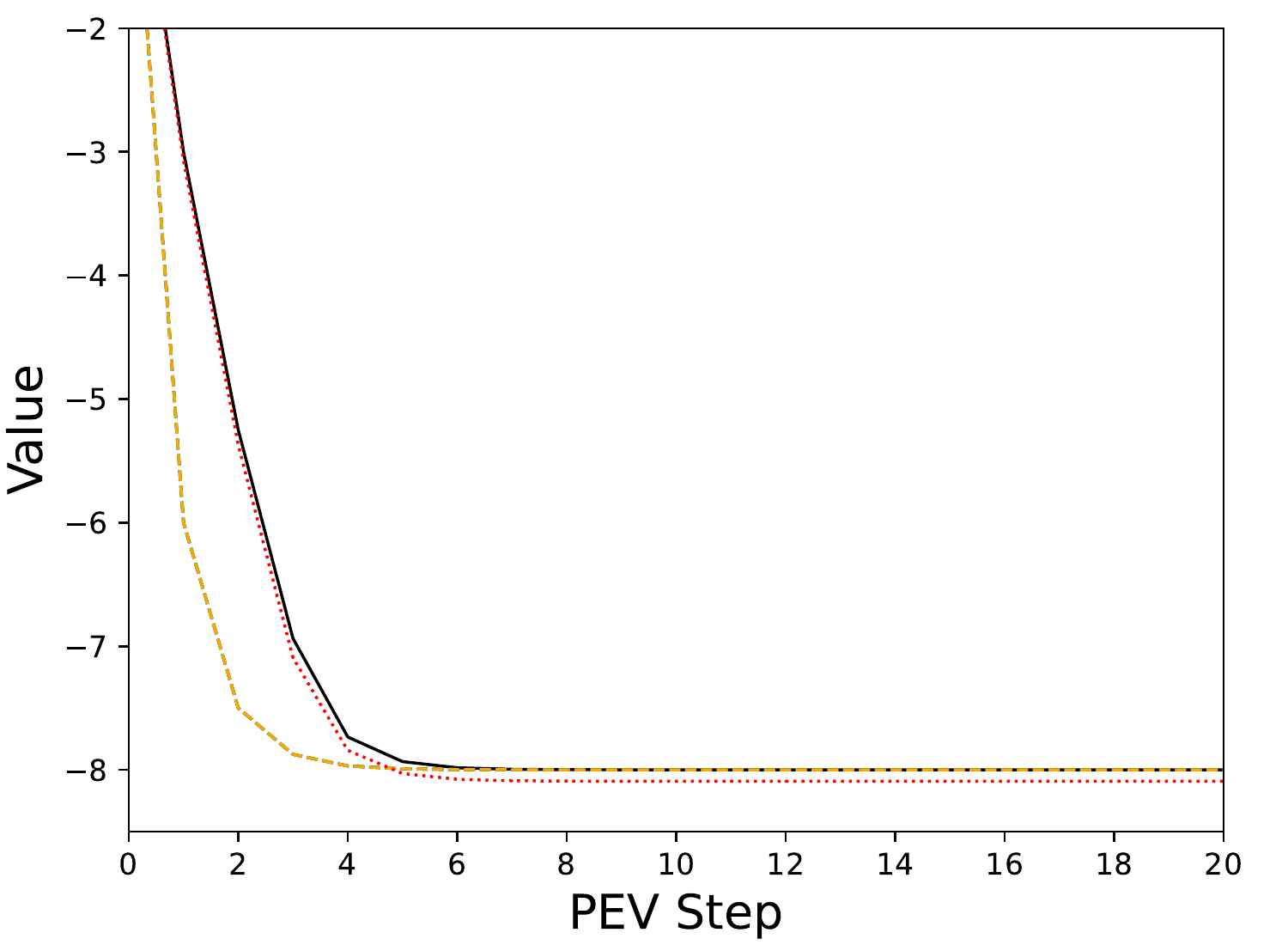}}
\caption{Process of policy evaluation.}
\label{fig.two-state mg pev}
\end{figure}

\begin{table}[tbhp]
\centering
\caption{Approximation error of for $\pi_0(s_1)$.}
\label{tab.appr_error_1}
\begin{tabular}{llll}
\toprule
Method & $\rho$ & Value & Error(\%) \\
\midrule
\multirow{4}{*}{SPI}&1.0 & -7.6243 & 8.92  \\
~&5.0 & -7.2334 & 3.34 \\
~&10.0 & -7.1195 & 1.71 \\
~&20.0 & -7.0598 & 0.86 \\
\hline
SPI-u&10.0 & -7.1385 & 1.98 \\
\hline
API&- & -6.9999 & 0.00 \\
\bottomrule
\end{tabular}
\end{table}

Next, with the converged value estimate $v^{\pi_0}$, we can conduct the PIM based on \eqref{eq.spi_pim} by solving the corresponding game matrix, i.e.,
\begin{equation}
\nonumber
\bordermatrix{
       & u_1       & u_2\cr
a_1    & -8.25         &-7.75  \cr
a_2    & -7.25         &-6.25  \cr }
\end{equation}
Clearly, the NE of this game matrix is $(a_1,u_2)$ with the equilibrium value as $-7.75$. Hence, the next policy pair $(\pi_1, \mu_1)$ can be extracted as
$$\pi_1(a_1|s_1)=1.0, \mu_1(u_2|s_1)=1.0,$$
and both the policies have degenerated into the deterministic form at this time.
Then, the next round of PEV will be conducted to evaluate $(\pi_1, \mu_1)$ as shown in Fig.~\ref{fig.two-state mg pev 2nd round}, where the true value calculated by API is $v^{\pi_1}(s_1)=-8.00$. 
Meanwhile, SPI-based method will always estimate $v^{\pi_1}(s_1)$ as the $-7.99$ regardless of the value of $\rho$, because the weight function $\mu_1$ only makes a difference at $u_2$. At this moment, their approximation errors are almost 0. However, SPI-u method still
maintains the considerable error about 1.12\%, meaning that the uniform weight function has limited ability to sample the most adversarial action.
\begin{table}[tbhp]
\centering
\caption{Approximation error of for $\pi_1(s_1)$.}
\label{tab.appr_error_2}
\begin{tabular}{llll}
\toprule
Method & $\rho$ & Value & Error(\%) \\
\midrule
\multirow{4}{*}{SPI}&1.0 & -8.00 & 0.00  \\
~&5.0 & -8.00 & 0.00 \\
~&10.0 & -8.00 & 0.00 \\
~&20.0 & -8.00 & 0.00 \\
\hline
SPI-u&10.0 & -8.09 & 1.12\ \\
\hline
API&- & -8.00 & 0.00 \\
\bottomrule
\end{tabular}
\end{table}
Subsequently, PIM aims to find the next policy pair by solving the following game matrix, i.e.,
\begin{equation}
\nonumber
\bordermatrix{
       & u_1       & u_2\cr
a_1    & -9.00 & -8.00  \cr
a_2    & -8.00 & -7.00  \cr }
\end{equation}
Since the NE of this matrix is still $(a_1,u_2)$, the extracted new policy $(\pi_2,\mu_2)$ is exactly identical to $(\pi_1,\mu_1)$, and thus their PEV will obtain the same $v^{\pi_2}(s_1)$ with $v^{\pi_1}(s_1)$, i.e., $v^{\pi_1}(s_1)=v^{\pi_2}(s_1)$. Therefore, the algorithm converges and we can conclude that the optimal NE policies ca be 
\begin{equation}
\label{eq:opt_ne_p}
\begin{aligned}
\pi^*(a_1|s_1)=1.0,\quad &\mu^*(u_2|s_1)=1.0
\end{aligned}
\end{equation}
with the equilibrium value as
\begin{equation}
\label{eq:opt_ne_v}
\begin{aligned}
v^*(s_1)=1.0,\quad &v^*(s_2)=0.0.
\end{aligned}
\end{equation}

The above analysis illustrates the convergence of API and SPI, and the results suggest that SPI can accurately approximate API by utilizing sufficiently large $\rho$ as well as a reasonable weight function. 
Moreover, as for NPI, we briefly illustrate the fact that value function of different policy pairs oscillates under some mild initial conditions, resulting in non-convergence at this two-state zero-sum MGs. 
To showcase this oscillation, the initial policy pair $(\pi_0, \mu_0)$ can be chosen as the deterministic form
\begin{equation}
\nonumber
\begin{aligned}
\pi_0(a_1|s_1)=1.0,
\mu_0(u_1|s_1)=1.0.
\end{aligned}
\end{equation}
Using the PEV process in \eqref{eq.npi_pev}, the joint value function esitimated by NPI is $v^{\pi_0,\mu_0}(s_1)=-12.00$, and the corresponding game matrix for PIM is
\begin{equation}
\nonumber
\bordermatrix{
       & u_1       & u_2\cr
a_1    & -12.00 & -9.00  \cr
a_2    & -11.00 & -10.00  \cr }
\end{equation}
Therefore, we can obtain the next policy pair $(\pi_1, \mu_1)$ as
\begin{equation}
\nonumber
\begin{aligned}
\pi_1(a_2|s_1)=1.0, \mu_1(u_2|s_1)=1.0.
\end{aligned}
\end{equation}
Similarly, its joint value function can be evaluated by PEV as
$v^{\pi_1,\mu_1}=-4.00$ and the corresponding game matrix is
\begin{equation}
\nonumber
\bordermatrix{
       & u_1       & u_2\cr
a_1    & -6.00 & -7.00 \cr
a_2    & -5.00 & -4.00  \cr }
\end{equation}
Consequently, the new policy pair $(\pi_2,\mu_2)$ extracted by PIM
become 
\begin{equation}
\nonumber
\begin{aligned}
\pi_1(a_1|s_1)=1.0, \mu_1(u_1|s_1)=1.0.
\end{aligned}
\end{equation}
which is identical to $(\pi_0,\mu_0)$ and we further conclude $v^{\pi_0,\pi_0}(s_1)=v^{\pi_2,\pi_2}(s_1)=-12.0$ and $v^{\pi_1,\pi_1}(s_1)=v^{\pi_3,\pi_3}(s_1)=-4.0$.
Actually, if we repeat the above PEV and PIM with more iteration, the value function will always oscillate between $-12.0$ and $-4.0$, unable to reach the optimal policies in \eqref{eq:opt_ne_p} and the optimal value in \eqref{eq:opt_ne_v}.
That is, NPI will diverge under this initial conditions and fail to find the optimal solutions.

\subsection{Robust Path Tracking}
Next, we show another classical robust path-tracking problem with continuous state and action spaces to characterize the training stability of SaAC. In this task, the vehicle aims to track a given path as accurately as possible in the presence of lateral perturbation as shown in Fig.~\ref{fig.path tracking problem}.
\begin{figure}[htbp]
\centering
\captionsetup[subfigure]{justification=centering}
\includegraphics[width=0.82\linewidth]{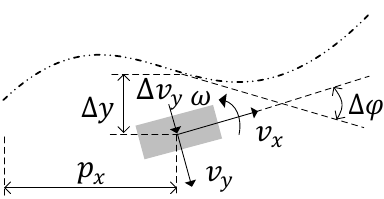}
\caption{Robust path tracking task. There exists an unknown disturbance in the lateral velocity.}
\label{fig.path tracking problem}
\end{figure}
The vehicle state is designed as $s=\left[p_x,\Delta y,\Delta \varphi, v_x, v_y, \omega\right]^\top$, where the elements are longitudinal position, lateral tracking error, heading angle error, longitudinal velocity, lateral velocity and raw rate respectively. 
Besides, we choose the front wheel angle $\delta$ and acceleration $A$ as the protagonist action to realize the lateral and longitudinal control, i.e., $a=[\delta, A]^\top$. At the same time, there exists a gaussian uncertainty $\Delta v_y \sim \mathcal{N}(\kappa,\Lambda^2)$ in the lateral velocity $v_y$ due to crosswinds or road lateral slopes, wherein $\kappa$ and $\Lambda$ are the learnable mean and standard deviation. Hence, the adversarial policy is expected to learn the worst $\Delta v_y$ that interferes with the tracking performance of the ego vehicle, i.e., $u=\Delta v_y$. Considering the actuator saturation, we bound the protagonist action and adversarial action with $\delta \in [-0.4, 0.4] {\rm rad}$, $A \in [-1.5, 3.0] {\rm m/s^2}$ and $\Delta v_y \in [-0.5, 0.5] {\rm m/s}$, respectively.  
The dynamics of this task are modeled as follows based on \cite{guan2021mixed}:
\begin{equation}
\label{eq.vehicle dynamical model}
p({s}, {a}, u)=\left[\begin{array}{c}
p_x+\Delta t\left(v_x \cos \Delta \varphi-v_y \sin \Delta \varphi\right) \\
\Delta y+\Delta t\left(v_x \sin \Delta \varphi+v_y \cos \Delta \varphi\right) \\
\Delta \varphi+\Delta t \omega \\
v_x+\Delta t\left(A+v_y \omega\right) \\
\frac{m v_x v_y+\Delta t\left[\left(L_f k_f-L_r k_r\right) \omega-k_f \delta v_x-m v_x^2 \omega\right]}{m v_x-\Delta t\left(k_f+k_r\right)}+u \\
\frac{-I_z \omega v_x-\Delta t\left[\left(L_f k_f-L_r k_r\right) v_y-L_f k_f \delta v_x\right]}{\Delta t\left(L_f^2 k_f+L_r^2 k_r\right)-I_z v_x}
\end{array}\right],
\end{equation}
where the dynamical parameters are shown in Table \ref{tab.dynamical parameters of robust path tracking}. 


\begin{table}[tbhp]
\centering
\caption{Dynamical parameters.}
\label{tab.dynamical parameters of robust path tracking}
\begin{tabular}{lll}
\toprule
Parameter & Symbol & Value \\
\midrule
Front-wheel cornering stiffness & $k_f$ & -155495N/rad  \\
Rear-wheel cornering stiffness & $k_r$ & -155495N/rad \\
Distance from CG to front axle & $L_f$ & 1.19m \\
Distance from CG to rear axle & $L_r$ & 1.46m \\
Mass & $m$ & 1520kg \\
Polar moment of inertia at CG & $I_z$ & 2640kg$\cdot\mathrm{m}^2$ \\
Discrete time step & $\Delta t$ & 0.1s\\ 
\bottomrule
\end{tabular}
\end{table}

Considering the tracking accuracy, energy efficiency and driving comfort, we construct a classical quadratic-form reward function with carefully tuned weights as
\begin{equation}
\nonumber
\label{eq.utility function of robust path tracking}
\begin{aligned}
    &r(s,a,u)= 0.03 (v_x -20)^{2}+0.8 {\Delta y}^{2}+30 {\Delta \varphi}^{2} +0.05 A^{2}\\
    &+0.02 \omega^{2}+5 \delta^{2},
\end{aligned}
\end{equation}
in which the objective is to maintain the vehicle speed close to 20m/s, while keeping a small tracking error and ensuring that the vehicle stays within the stability region. 
Then, $\pi_{\theta}$ is to minimize the accumulated rewards, while $\mu_{\phi}$ aims to maximize them to exacerbate the driving
performance by outputting the lateral interference. 
Thus, the protagonist policy trained in this setting takes into account possible disturbances during training and should become more robust against the distractions from real environment.

The reference path is the composition of three sine curves with different magnitudes and periods,
\begin{equation}
\nonumber
\label{eq.ref_path}
y_{\rm ref}=7.5\sin\frac{2\pi}{200} +2.5\sin\frac{2\pi}{300}-5\sin\frac{2\pi}{400}.
\end{equation}
To verify the effectiveness of SaAC, we select the model-based version of robust adversarial reinforcement learning (RaRL) \cite{pinto2017robust} as a comparison benchmark, which is indeed a parameterized version of NPI. The only difference of SaAC and RaRL is the target value in value network update, wherein the former uses the bellman operator to construct the target and the latter adopts the smooth bellman operator to calculate it.
We also adopt another classic model-based RL method called approximate dynamic programming (ADP) wherein only the protagonist policy is constructed without the adversarial training, to showcase the robust advantages of SaAC.
Finally, we also construct the SaAC-u algorithm, the extension of SPI-u with neural networks, where the continuous uniform distribution is used as the weight function rather than the adversarial policy.
For fair comparison, all these algorithms adopt the same algorithmic framework and training parameters. We utilize multi-layer perceptron (MLP) as the function approximator of value function, protagonist policy and adversarial policy. Each network contains five 256-unit hidden layers and we apply the Gaussian error linear unit (GeLU) as their activation functions. In addition, hyperbolic tangent (tanh) is utilized for the output layers of policy networks to saturate their outputs. Meanwhile, the value network uses a linear function as the activation of its output layer. With a cosine-annealing learning rate, Adam \cite{kingma2014adam} is used to update all networks. We also deploy an asynchronous parallel training framework \cite{guan2021mixed} to accelerate the training process. Specifically, 2 samplers pick up data by interacting with the environment, 2 buffers store samples, and 10 learners optimize the networks by exploiting the collected data. The detailed hyperparameter settings are listed in Table \ref{table.hyper of robust path tracking}.
\begin{table}[!htbp] 
\centering
\caption{Training hyperparameters.}
\label{table.hyper of robust path tracking}
\begin{threeparttable}[h]
\begin{tabular}{ll}
\toprule
Hyperparameter & Value \\
\hline
Optimizer &  Adam ($\beta_{1}=0.9, \beta_{2}=0.999$)\\
Function approximator  & MLP \\
Number of hidden layers & 5\\
Number of hidden units & 256\\
Activation of hidden layer & GeLU\\
Activation of output layer & Tanh (policy) / Linear (Value)\\
Batch size & 256\\
Learning rate anneal & Cosine anneal \\
Policy learning rate & $5\times10^{-5}$ $\rightarrow$ $1\times10^{-6}$ \\
Value learning rate & $8\times10^{-5}$ $\rightarrow$ $1\times10^{-6}$\\
Discount factor & 0.99 \\
Policy updating interval $M$ & 1 \\
Temperature of target network & 0.001 \\
Number of samplers & 2 \\
Number of buffers & 2 \\
Number of learners & 10\\
\bottomrule
\end{tabular}
\end{threeparttable}
\end{table}

During training, we test the tracking performance of the learned protagonist policy every 3,000 iterations. In each test, the ego vehicle will be initialized randomly and attempts to track the given path for 150 steps. Then, we will calculate the total average return of 5 episodes under the same policy to show its current performance.
The training results for 5 runs with different seeds are shown in Fig.~\ref{fig.test_total_return}.
In Table \ref{tab.test_return}, we also show the converged return and characterize the convergence speed of different algorithms, which is measured by the average number of iterations needed to firstly exceed a certain goal performance, set -25 in our experiments. 
Clearly, SaAC obtains the highest return when converged which is around -5 and SaAC-u almost has the same return but it suffers a low convergence speed, needing 54000 iterations to acquire a fair performance.
This also demonstrates that the adversarial policy will be a better weight function than a random distribution, which will accelerates the policy evaluation and thus converges rapidly.
Besides, ADP finally attains a lower return than SaAC and SaAC-u, which means that it cannot deal with the disturbance from environment perfectly with the normal training process.
Finally, we can see RaRL achieves the lowest average return over the 5 runs, and Table \ref{tab.test_return} shows that it suffers a rather large variance in the converged return under different seeds.
We can conclude that RaRL is very unstable in training under different initial conditions, indicating the similar properties with SPI in tabular case.


\begin{figure}[htbp]
\centering
\captionsetup[subfigure]{justification=centering}
\includegraphics[width=0.95\linewidth]{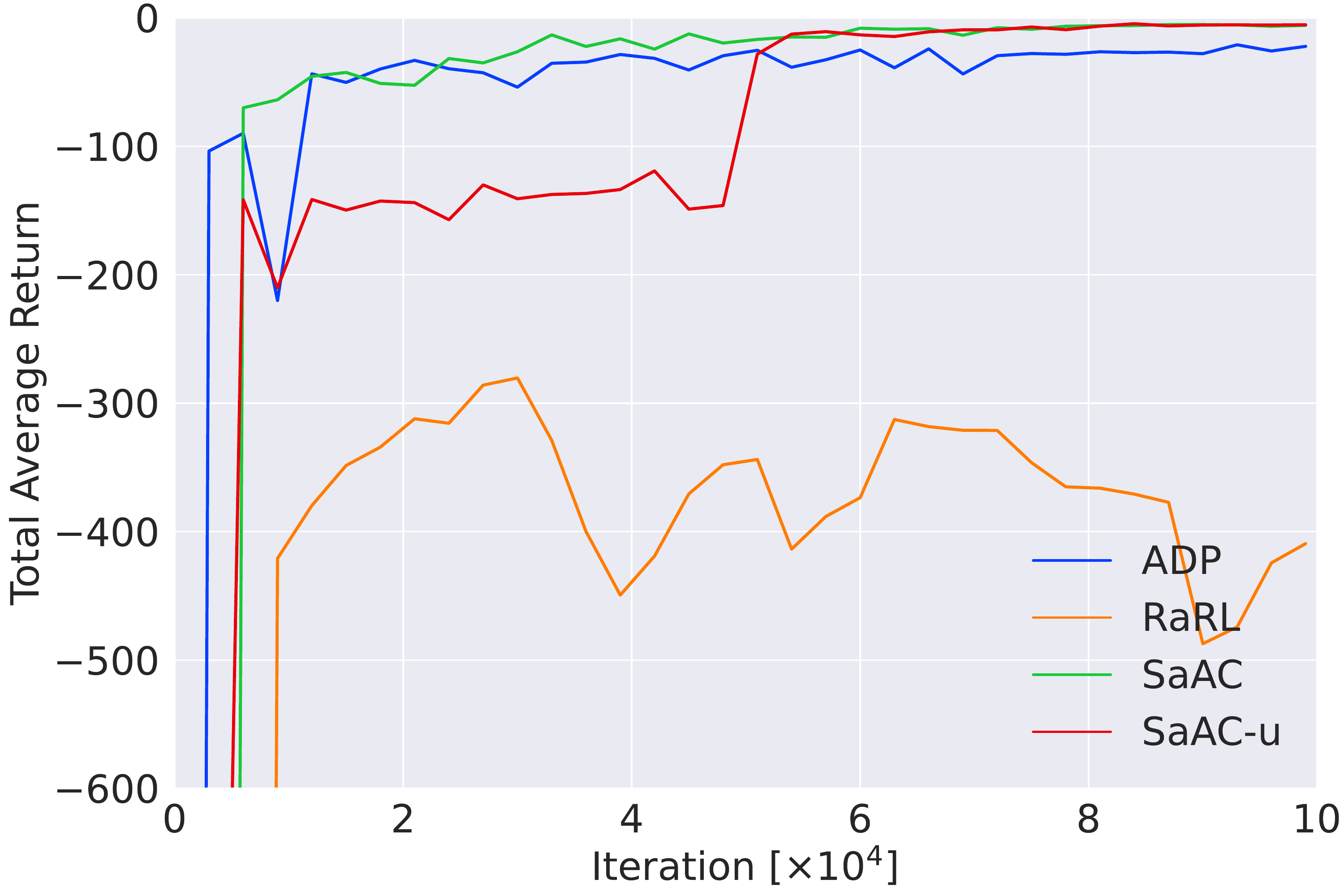}
\caption{Learning curves. The solid lines correspond to the mean over 10 runs.}
\label{fig.test_total_return}
\end{figure}
\begin{table}[tbhp]
\centering
\caption{TAR of each run.}
\label{tab.test_return}
\begin{tabular}{lll}
\toprule
Algorithm & Performance & Convergence speed \\
\midrule
ADP & $-20.92 \pm 12.40 $ & 60000 \\
RaRL & $-280.35 \pm 379.64 $ & - \\
SaAC & \bm{$-5.18 \pm 0.69$} & \textbf{33000} \\
SaAC-u & $-5.38 \pm 0.61$ & 54000 \\
\bottomrule
\end{tabular}
\end{table}

Fig.~\ref{fig.track_error} graphs the tracking errors of all baseline algorithms
during training. We can see RaRL hardly tracks the reference path accurately because there exist high errors in position. Besides, SaAC-u shows a slow convergence speed in heading error compared with SaAC. Finally, ADP obtains a higher position error and heading angle error than SaAC, meaning the disturbance will work obviously if the training process is conducted without considering the environmental variation.
\begin{figure}[htbp]
\centering
\captionsetup[subfigure]{justification=centering}
\subfloat[Position error]
{\label{fig.pos_error}
\includegraphics[width=0.90\linewidth]{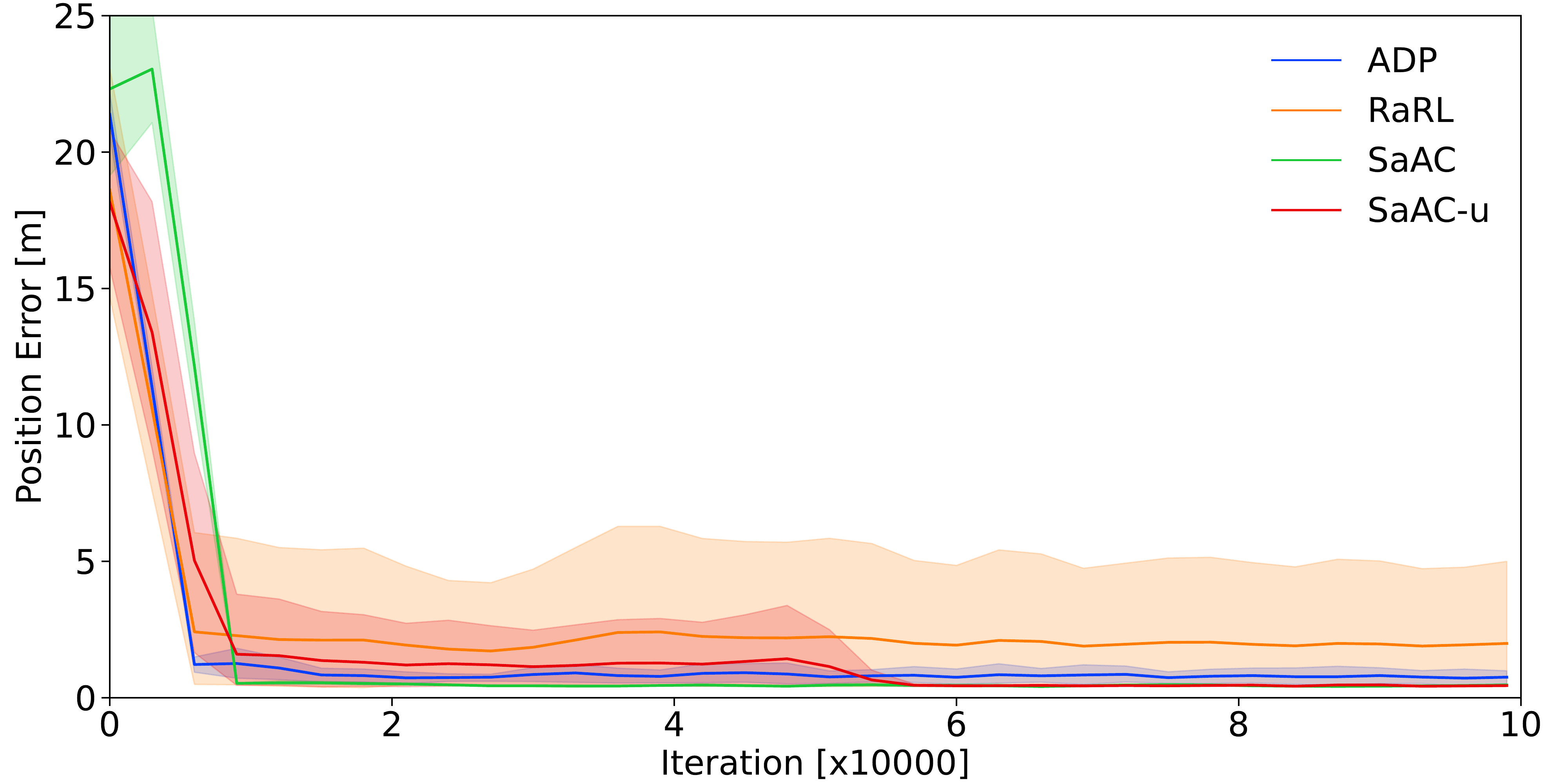}} \\
\subfloat[Heading angle error]
{\label{fig.head_error}
\includegraphics[width=0.90\linewidth]{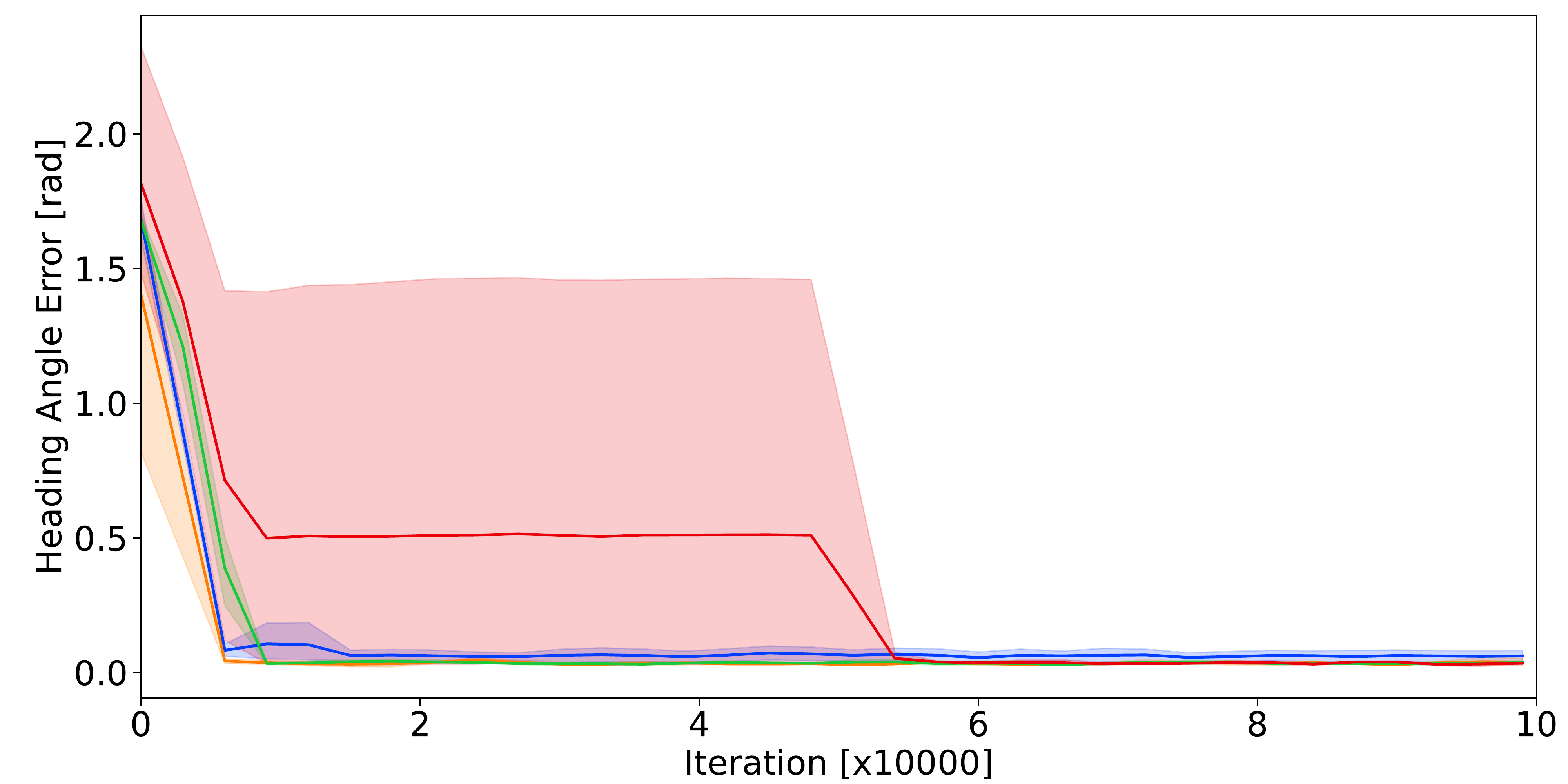}}
\caption{Errors of robust path tracking task. (a) Position error. (b) Heading error. The solid lines correspond to the mean and the shaded regions correspond to 95\% confidence interval over 5 runs.}
\label{fig.track_error}
\end{figure}
We also conduct the robust test of the trained policy in Fig.~\ref{fig.robust_test}, wherein the protagonist policies of ADP and SaAC after 100000 iterations under the same random seed are chosen to track the path for 5 episodes.
Meanwhile, we establish a varying disturbance to the lateral velocity of the model in \eqref{eq.vehicle dynamical model} from $-0.3m/s$ to $0.3m/s$ with the interval 0.06.
Obviously, the total average return respectively reaches their highest point near the disturbance of 0 and the bigger disturbance will cause the performance degradation for these two algorithms. However, SaAC always behaves better than ADP under all these disturbances, which is befit from the robustness of adversarial training. 

To sum up, SPI can acquire a satisfactory solution of adversarial Bellman equation with the large factor $\rho$ and the adversarial weight function. And with neural network as function approximators, SaAC can maintain a fair training stability as well as improve the robustness of protagonist policy in a large margin.

\begin{figure}[htbp]
\centering
\captionsetup[subfigure]{justification=centering}
\includegraphics[width=0.90\linewidth]{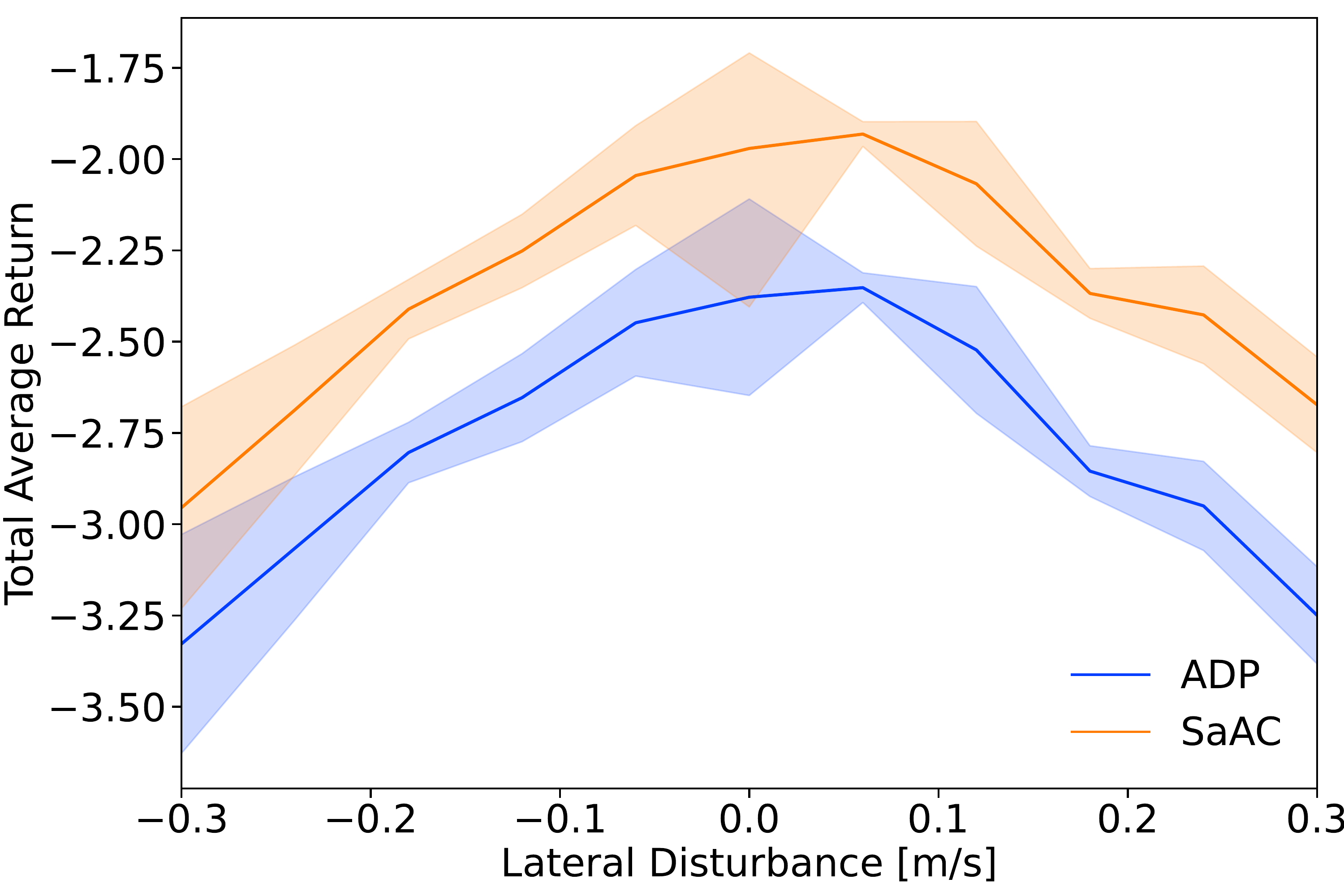}
\caption{Testing curves under lateral disturbance. The solid lines correspond to the mean and the shaded regions correspond to 90\% confidence interval over 5 episode.}
\label{fig.robust_test}
\end{figure}
\section{Conclusion}
This paper aims to solve the zero-sum MGs approximately to handle the complex tasks with large-scale action spaces. To this end, we propose the SPI algorithm wherein WLSE function is developed to approximate the maximum operator. Specially, we can control the approximation error by adjusting the approximation factor and the weighted function enables an efficient sampling in action spaces. We also prove the convergence of SPI and analyze its approximation error in $\infty -$norm based on the contraction mapping theorem.
Based on this, we propose SaAC by extending SPI with the neural networks as the function approximators, which is a typical model-based RL algorithm to train the protagonist and adversarial policy simultaneously.
We apply our algorithm in a two-state MGs and the robust path tracking tasks respectively. Results show that SPI can approximate the worst-case value function with a high accuracy and SaAC can stabilize the training process and lead to the adversarial robustness.
About the future work, we will extend the SPI-based algorithm to handle more complex industrial tasks such autonomous driving, where the self-driving car needs to compete with its surrounding participants to strive for a successful navigation. Moreover, we will consider the constraints to the adversarial policy in current minimax formation to make the less radical adversarial behaviors.

\bibliographystyle{IEEEtran}
\bibliography{cite}

\end{document}